%% file: main.tex
\documentclass{article}
\usepackage{ijcai25}

\usepackage{times}
\usepackage{soul}
\usepackage{url}
\usepackage[hidelinks]{hyperref}
\usepackage[utf8]{inputenc}
\usepackage[small]{caption}
\usepackage{graphicx}
\usepackage{amsmath}
\usepackage{amsthm}
\usepackage{booktabs}
\usepackage{algorithm}
\usepackage{algorithmic}
\usepackage[switch]{lineno}

\usepackage{multirow}
\usepackage[dvipsnames,table]{xcolor}
 
\usepackage{booktabs} 
\usepackage{multirow} 
\usepackage{graphicx} 
\usepackage{makecell}  
\usepackage{array}      
\usepackage[table]{xcolor} 

\usepackage[most]{tcolorbox}
\usepackage{xcolor}
\usepackage{enumitem}
\usepackage{microtype}

\definecolor{PCGreen}{HTML}{1B7F3A}
\definecolor{PCOrange}{HTML}{B85C1A}
\definecolor{PCBlue}{HTML}{1F4E8C}

\colorlet{PCGreenBg}{PCGreen!4}
\colorlet{PCOrangeBg}{PCOrange!4}
\colorlet{PCBlueBg}{PCBlue!4}

\definecolor{NumBgA}{RGB}{230,245,245}
\definecolor{NumBgB}{RGB}{238,249,255} 

\definecolor{NumBgC}{RGB}{255,246,230} 
\definecolor{LavenderBg}{RGB}{249,240,255}

\newcommand{\NumBg}{white}
\newcolumntype{N}{>{\cellcolor{\NumBg}}c} 

\newcommand{\NumsBg}{white}
\newcolumntype{R}{>{\cellcolor{\NumsBg}}c} 

\newcommand{\NumBgD}{LavenderBg}
\newcolumntype{W}{>{\columncolor{\NumBgD}}c} 

\newcommand{\NumBgT}{NumBgA}
\newcolumntype{T}{>{\columncolor{\NumBgT}}c} 

\newcommand{\NumBgF}{NumBgC}
\newcolumntype{F}{>{\columncolor{\NumBgF}}c} 
\setlist[itemize]{leftmargin=1.15em,itemsep=1.2pt,topsep=1.2pt}

\newcommand{\tokend}{\texttt{\textless/end/\textgreater}}
\newcommand{\tokenok}{\texttt{\textless ok\textgreater}}

\newcommand{\CardText}{\small\linespread{1.04}\selectfont}

\newtcolorbox{PromptCard}[3]{%
  enhanced,
  breakable,
  colback=#2,
  colframe=#1!85!black,
  boxrule=0.55pt,
  arc=2.2mm,
  left=2.8mm,right=2.8mm,top=2.4mm,bottom=2.4mm,
  boxsep=1.0mm,
  borderline west={2.1pt}{0pt}{#1}, 
  title={#3},
  colbacktitle=#1,
  coltitle=white,
  fonttitle=\bfseries\sffamily,
  center title,
  titlerule=0pt,
  toptitle=1.2mm,
  bottomtitle=1.2mm,
  before skip=8pt,
  after skip=8pt,
}
\usepackage{array}
\usepackage{cuted}   
\usepackage{capt-of}

\newtheorem{theorem}{Theorem}

\title{Affective Flow Language Model for Emotional Support Conversation
}

\author{
Chenghui Zou$^1$
\and
Ning Wang$^1$\textsuperscript{*}\and
Tiesunlong Shen$^{2}$\textsuperscript{*}\and
Luwei Xiao$^2$\and
Chuan Ma$^1$\and
Xiangpeng Li$^1$\and
Rui Mao$^3$ and
Erik Cambria$^3$\\
\affiliations
$^1$College of computer science, Chongqing University\\
$^2$National University of Singapore\\
$^3$Nanyang Technological University\\
\emails
zouchenghui@stu.cqu.edu.cn,
nwang5@cqu.edu.cn,
tiesunlong@nus.edu.sg,
louisshaw008@gmail.com,
chuan.ma@cqu.edu.cn,
xiangpengli.cs@gmail.com,
\{rui.mao, cambria\}@ntu.edu.sg
}

\begin{document}

\maketitle
{
\renewcommand{\thefootnote}{*}
\footnotetext{Corresponding Author.}
}
\begin{abstract}
Large language models (LLMs) have been widely applied to emotional support conversation (ESC). 
However, complex multi-turn support remains challenging.
This is because existing alignment schemes rely on sparse outcome-level signals, thus offering limited supervision for intermediate strategy decisions. 
To fill this gap, this paper proposes affective flow language model for emotional support conversation (AFlow), a framework that introduces fine-grained supervision on dialogue prefixes by modeling a continuous affective flow along multi-turn trajectories. 
AFlow can estimate intermediate utility over searched trajectories and learn preference-consistent strategy transitions. 
To improve strategy coherence and empathetic response quality, a subpath-level flow-balance objective is presented to propagate preference signals to intermediate states.
Experiment results show consistent and significant improvements over competitive baselines in diverse emotional contexts. Remarkably, AFlow with a compact open-source backbone outperforms proprietary LMMs such as GPT-4o and Claude-3.5 on major ESC metrics.
 Our code is available at \textcolor{blue}{\url{https://github.com/chz2025/AffectiveFlow}}. 
\end{abstract}

\section{Introduction}

Emotional Support Conversation (ESC) has emerged as an important research area in natural language processing~\cite{chen2025socialsim}. ESC aims to alleviate user emotional distress through strategy-guided multi-turn dialogues~\cite{zhao2024esc}. Traditional ESC systems face significant challenges in understanding complex emotional dynamics, generating contextually appropriate responses, and maintaining coherent strategy transitions across turns~\cite{cheng-etal-2022-improving}. To address these limitations, large language model (LLM)-based approaches have been increasingly adopted, leveraging their advanced capabilities in semantic understanding and open-domain dialogue to deliver more effective support~\cite{kang2024can,wan2025emodynamix}. However, existing LLM-based methods still struggle with maintaining long-horizon strategy consistency and achieving fine-grained alignment with professional psychological protocols, which restricts their effectiveness in complex multi-turn interactions~\cite{zhang2024escot,zhou-etal-2025-reg}.

\begin{figure}[t]
    \centering
    \includegraphics[width=1.0\linewidth]{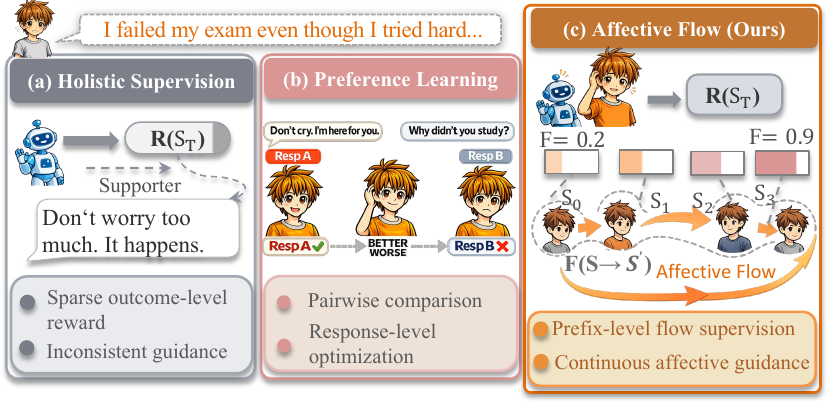} 
    \caption{Comparison of Emotional Support Conversation approaches}
    \label{fig:framework}
\end{figure}

To address these issues, methods based on holistic supervision and preference learning have emerged. As shown in Figure~\ref{fig:framework}~(a), holistic supervision approaches generate supporter responses and evaluate them using sparse outcome-level rewards at the end of the conversation, providing limited guidance for intermediate strategy decisions~\cite{liu-etal-2021-towards,hong2024interactive}. On the other hand, preference-learning methods, as shown in Figure~\ref{fig:framework}~(b), generate multiple candidate responses, perform pairwise comparison to identify better and worse outputs, and optimize the model through response-level preference signals~\cite{rafailov2023direct,shen2025reasoning,zhao2025chain}.

However, these methods still face several challenges.
\textbf{(i) Context sensitivity.} Existing approaches inadequately model turn-level context evolution, which weakens the adaptation of supportive timing and intensity across turns~\cite{ren2025fine}.
\textbf{(ii) Credit assignment.} Preference-based methods mainly rely on sparse, response-level supervision and provide insufficient intermediate targets for attributing outcomes to earlier strategy decisions~\cite{zhou-etal-2025-reg}.
\textbf{(iii) Affective dynamics.} Current training signals do not explicitly reflect fine-grained affective changes over the dialogue, hindering long-horizon strategy coherence and coordination with psychological practice~\cite{wan2025emodynamix,sun2025rethinking}.

To address these challenges, we propose \textbf{AFlow} (Figure~\ref{fig:framework}~(c)), an alignment framework for multi-turn ESC built on search-distilled, affective flow preference optimization (AFPO). AFlow associates each dialogue prefix with an intermediate affective value derived from searched trajectories, providing dense supervision for turn-level strategy decisions beyond terminal rewards. Based on this signal, AFPO enforces a subpath-level flow-balance constraint so that preference information is propagated consistently from later outcomes to intermediate states, improving long-horizon strategy progression and stable supportive behavior.

The contributions of this work are three-fold: (1) We formalize Affective Flow as a non-negative flow field over dialogue prefixes in multi-turn ESC, linking terminal affective outcomes to intermediate states under flow-balance constraints.(2) We propose AFPO, which enforces subpath-level flow-balance to obtain dense and structurally consistent supervision for intermediate strategy decisions. (3) We conduct extensive experiments and the results show consistent improvements in strategy macro-F1 and response diversity, while preserving competitive generation quality.

\section{Related Work}
Emotional support conversation (ESC) has evolved from supervised, strategy-conditioned generation and structure-aware modeling that explicitly predicts or constrains supportive actions~\cite{liu-etal-2021-towards,tu-etal-2022-misc,cheng-etal-2022-improving,deng2023knowledge,zhang2024escot} to LLM-based supporters that increasingly rely on prompting, self-/synthetic-data augmentation, and preference-driven alignment to shape response behavior~\cite{zheng-etal-2024-self,kang2024can,sun2025rethinking}. However, many of these alignment schemes optimize response-level preferences or dialogue-level outcomes, which under-specify intermediate strategy decisions and weaken long-horizon credit assignment in multi-turn support ~\cite{rafailov2023direct,ouyang2022training,meng2024simpo,shen2026llmdoctor,wan2025emodynamix,zhao2025chain}. Meanwhile, strategy-aware refinements introduce more structured supervision via planning, dynamics modeling, or explicit strategy optimization, but often require additional components and still do not enforce globally consistent value propagation across dialogue subpaths~\cite{zhou-etal-2023-facilitating,cheng2023pal,wan2025emodynamix,zhao2025chain,zhao2025chain,madani2025steering,shen2025flow}. 
This work introduces AFlow, which derives prefix-level affective supervision from search-constructed trajectories and enforces subpath flow-balance constraints to propagate preference-consistent value through dialogue prefixes, enabling efficient long-horizon strategy alignment beyond coarse sequence-level preferences. More detailed discussion of related work is provided in Appendix~A.

\section{Preliminaries}
Multi-turn Emotional Support Conversation (ESC) is modeled as a Markov Decision Process (MDP). Let $s_t \in \mathcal{S}$ denote the dialogue state at turn $t$, which encodes the dialogue history up to that turn, and let $a_t \in \mathcal{A}$ denote the supporter action at turn $t$. The environment transitions according to $p(s_{t+1}\mid s_t,a_t)$. A conversation with horizon $T$ is represented as a finite interaction trajectory
\begin{equation}
\tau = (s_0, a_0, s_1, a_1, \ldots, s_T), \quad s_T \in \mathcal{S}_{\mathrm{term}},
\end{equation}
where $\mathcal{S}_{\mathrm{term}}$ denotes the set of terminal states. Each terminal state $s_T$ is assigned a non-negative terminal affective reward $R(s_T)\ge 0$, which evaluates the completed interaction from the perspective of emotional support.

Generative Flow Networks (GFlowNets)~\cite{bengio2023gflownet} model sampling over discrete objects through sequential construction guided by a flow-balance objective. The generation process is represented as a directed graph whose nodes are partial objects (states) and edges are one-step extensions. Let $\mathrm{Pred}(s)$ and $\mathrm{Succ}(s)$ denote the predecessor and successor sets of $s$. Each transition is associated with a non-negative edge flow $F(s \rightarrow s')\ge 0$. For any non-terminal state $s$, flow conservation requires that the total incoming flow equals the total outgoing flow:
\begin{equation}
\sum_{s' \in \mathrm{Pred}(s)} F(s' \rightarrow s)
\;=\;
\sum_{s'' \in \mathrm{Succ}(s)} F(s \rightarrow s'').
\end{equation}
For a terminal state $s_T \in \mathcal{S}_{\mathrm{term}}$, the incoming flow is absorbed and matched to a non-negative terminal reward:
$\sum_{s' \in \mathrm{Pred}(s_T)} F(s' \rightarrow s_T) = R(s_T)$ with $R(s_T) \ge 0$.

\section{Methodology}
\label{sec:method}
In this section, we present the Affective Flow Language Model (AFlow) (see Figure~\ref{fig:aflow}). AFlow is realized through three stages: (i) MCTS-based Affective Flow signal construction that derives psychology-grounded, fine-grained supervision for strategy-level decisions; (ii) Affective Flow Preference Optimization (AFPO) that maps constructed flow signals into flow targets over intermediate dialogue states and optimizes a subpath-level flow-balance objective to propagate preferences across dialogue trajectories; (iii) Lightweight inference that selects supportive actions using the learned policy prior and value guidance.

\begin{figure*}[t] 
    \centering
    \includegraphics[width=\textwidth]{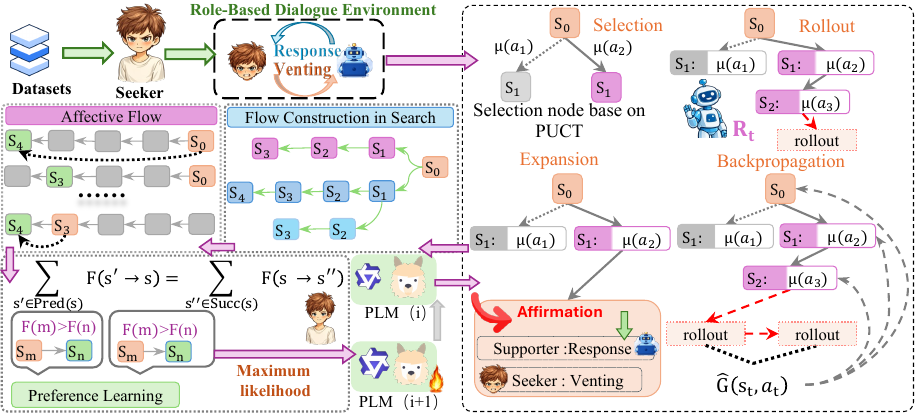} 
    \caption{Detailed diagram of the AFlow framework for emotional support conversation.}
    \label{fig:aflow}
\end{figure*}

\subsection{Affective Flow Reward Acquisition}
\label{subsec:mcts}
In AFlow, we use a role-separated LLM environment together with MCTS to construct fine-grained affective reward signals for multi-turn ESC, and transform these signals into supervision for Affective Flow optimization.
\paragraph{Role-Based Dialogue Environment.}
Tree expansion is instantiated by three role-specific agents: a seeker LLM, a supporter LLM, and a rewarder LLM, The prompt-based environment is implemented with LLMs, each of which is instantiated to play all three roles. As shown in Figure~\ref{fig:aflow}, at a dialogue state $s_t$ (the conversation history up to turn $t$), the strategy policy produces a prior distribution $\pi_\theta(a \mid s_t)$ over $a \in \mathcal{A}$ . Conditioned on the selected strategy $a_t$ and history dialogue states  $s_{t-1}$, the supporter generates the next supporter utterance given $(s_t, a_t)$. The seeker then simulates the user and produces the subsequent user utterance to advance the dialogue and induce the next state $s_{t+1}$. The rewarder evaluates the newly generated supporter utterance along four dimensions \textbf{Empathy}, \textbf{Information\_Quality}, \textbf{Humanoid\_Naturalness}, and \textbf{Strategic\_Efficacy} and outputs an immediate reward signal $r_t$ (full environment specification is provided in Appendix~B).

\paragraph{Affective Flow Rewards on Dialogue States.}
MCTS provides empirical targets over dialogue states and strategy actions for Affective Flow optimization.
Following the standard four-step loop, each simulation proceeds with selection, expansion, simulation/rollout, and backpropagation.
\paragraph{Selection.}
The exploration ratio $\rho_t\in[0,1]$ controls the uniform exploration weight and is defined as
\begin{equation}
\rho_t =
\frac{\kappa}{\kappa+\log\!\big(N(s_t)+1\big)},
\end{equation}
where $\kappa=0.1$ is a hyperparameter and $N(s)=\sum_{a\in\mathcal{A}}N(s,a)$ denotes the state visit count. 

The strategy sampling distribution $\mu(\cdot\mid s_t)$ is then constructed as

\begin{equation}
\begin{aligned}
&\mu(a \mid s_t)
=
(1-\rho_t)\,\mathrm{Softmax}(u_t(\cdot))_a
+\rho_t\cdot \mathrm{Unif}(\mathcal{A})(a),\\
& u_t(a)
=
Q(s_t,a)
+\lambda\,\pi_\theta(a \mid s_t)\,
\sqrt{\frac{N(s_t)}{1+N(s_t,a)}},
\end{aligned}
\label{eq:action-dist}
\end{equation}
where $Q(s_t,\cdot)$ denotes action-value estimates, $\pi_\theta(\cdot\mid s_t)$ is the policy prior, $\lambda$ controls the exploration--exploitation trade-off, $N(s_t,\cdot)$ denotes state--action visit counts, and $\mathrm{Unif}(\mathcal{A})(a)=1/|\mathcal{A}|$ denotes the uniform distribution over $\mathcal{A}$. The strategy $a_t$ is sampled from $\mu(\cdot\mid s_t)$.

\paragraph{Expansion.}
Given the selected pair $(s_t,a_t)$, the successor dialogue state $s_{t+1}$ is instantiated using the role-based simulation in Section~\ref{subsec:mcts}: the supporter generates the next supporter utterance conditioned on $(s_t,a_t)$, and the seeker generates the subsequent user utterance to advance the dialogue. The resulting dialogue history is stored at $s_{t+1}$ for subsequent selection and rollout. When a state is expanded for the first time, statistics for all strategy actions in $\mathcal{A}$ are initialized with $Q=0$ and $N=0$ to ensure consistent updates across branches.

\paragraph{Simulation/rollout.}
A bounded-depth rollout with maximum depth $L=3$ is used to estimate the downstream utility of taking $a_t$ at $s_t$. Premature termination is suppressed by setting the return to zero when an end token is generated during rollout, define the immediate reward as:
\begin{equation}
r_t =
\frac{w_EE_t + w_II_t + w_HH^{\mathrm{hum}}_t + w_SS^{\mathrm{stra}}_t}{\ell},
\end{equation}
where $E_t,I_t,H_t,S^{\mathrm{stra}}_t \in [0,5]$ denote the rewarder scores for empathy, information\_quality, humanoid\_naturalness, and strategic\_efficacy at turn $t$, respectively,With $(w_E,w_I,w_H,w_S)=(0.1,0.1,0.1,0.2)$  and $\ell=2.5$ normalizes the reward scale. 

The rollout then proceeds under $\hat{\pi}_\theta(a\mid s)$ and collects $r_{t+1},\ldots,r_{t+L'}$ with realized depth $L'$.
Let $I_{\mathrm{terminal}}=0$ if an end token $a_{\mathrm{end}}$ is generated during rollout, and $I_{\mathrm{terminal}}=1$ otherwise.
\begin{equation}
\hat{G}(s_t, a_t)
=
I_{\mathrm{terminal}}\cdot
\frac{1}{L'}
\sum_{k=1}^{L'} r_{t+k}.
\end{equation}

\paragraph{Backpropagation.}
The return estimate $\hat{G}(s_t,a_t)$ is propagated along the selected path to update $N(s,a)$ and $Q(s,a)$ for all visited $(s_i,a_i)$. The visit count is incremented, and $Q$ is updated by an incremental mean:
\begin{equation}
N(s_i,a_i)\leftarrow N(s_i,a_i)+1.
\end{equation}
\begin{equation}
Q(s_i,a_i)\leftarrow Q(s_i,a_i)+
\frac{1}{N(s_i,a_i)}\big(\hat{G}(s_t,a_t)-Q(s_i,a_i)\big).
\end{equation}
After performing multiple MCTS simulations, the resulting $\{N(s,a),Q(s,a)\}$ serve as the MCTS-derived supervision for the subsequent affective flow optimization stage.
\subsection{Affective Flow Preference Optimization}
\label{sec:afpo}
AFPO trains the strategy policy $\pi_\theta(a\mid s)$ together with an evaluation model $V_\phi$ using the search-produced estimates $\widehat{Q}(s)$ and $\widehat{Q}(s,a)$. The objective enforces subpath flow-balance on dialogue states so that affective supervision propagates from later outcomes to intermediate strategy decisions.

\paragraph{Optimization variables.}
AFPO optimizes $\pi_\theta(a\mid s)$ and $V_\phi$. The evaluation model is used in two forms: $V_\phi(s)$ for state-wise valuation in flow construction, and $V_\phi(s,a)$ for edge-wise scoring in preference supervision.

\paragraph{Affective Flow parameterization.}
The state flow associated with a dialogue state $s$ is defined as:
\begin{equation}
F(s)=\widehat{Q}(s)\,V_\phi(s),
\label{eq:afpo_flow}
\end{equation}
where $\widehat{Q}(s)$ is the MCTS-derived state value estimate at $s$, and $V_\phi(s)$ provides a learnable valuation.

\paragraph{Flow Construction in Search.}
The MCTS data induce a directed acyclic graph whose nodes are dialogue prefixes and whose edges are
recorded one-step extensions. Let $F(s\!\rightarrow\! s')\ge 0$ denote the edge flow, and define the state flow
as the total incoming flow:
\begin{equation}
F(s)=\sum_{s'\in \mathrm{Pred}(s)} F(s'\!\rightarrow\! s),
\label{eq:afpo_state_inflow}
\end{equation}
where $\mathrm{Pred}(s)$ is the set of predecessor states of $s$ in the search tree.
For non-terminal nodes, flow conservation holds:
\begin{equation}
\sum_{s'\in \mathrm{Pred}(s)} F(s'\!\rightarrow\! s)
=
\sum_{s''\in \mathrm{Succ}(s)} F(s\!\rightarrow\! s'').
\label{eq:afpo_conservation}
\end{equation}
with $\mathrm{Succ}(s)$ denoting the successor set.

To connect flow to the policy, the edge flow on a visited transition is factorized by the forward strategy likelihood:
\begin{equation}
F(s_i\!\rightarrow\! s_{i+1}) = F(s_i)\,\pi_\theta(a_i\mid s_i),
\label{eq:afpo_edge_forward}
\end{equation}
where $a_i$ is the strategy selected on the recorded edge from $s_i$ to $s_{i+1}$.
For training-time balancing, a backward policy $\pi_B(\cdot\mid s_{i+1})$ over predecessors is introduced so that
the same edge flow can also be written as:
\begin{equation}
F(s_i\!\rightarrow\! s_{i+1}) = F(s_{i+1})\,\pi_B(s_i\mid s_{i+1}).
\label{eq:afpo_edge_backward}
\end{equation}
Equating Eq.~(\ref{eq:afpo_edge_forward}) and Eq.~(\ref{eq:afpo_edge_backward}) yields the edge-wise flow matching:
\begin{equation}
F(s_i)\,\pi_\theta(a_i\mid s_i)=F(s_{i+1})\,\pi_B(s_i\mid s_{i+1}).
\label{eq:afpo_edge_match}
\end{equation}

\paragraph{Sub-Path flow-balance constraint.}
Consider a searched trajectory and any subpath from $s_m$ to $s_n$ with $m<n$, together with the intervening
strategy sequence $\{a_i\}_{i=m}^{n-1}$.
Multiplying Eq.~(\ref{eq:afpo_edge_match}) along the subpath yields the general subpath balance:
\begin{equation}
F(s_m)\prod_{i=m}^{n-1}\pi_\theta(a_i\mid s_i)
=
F(s_n)\prod_{i=m}^{n-1}\pi_B(s_i\mid s_{i+1}).
\label{eq:afpo_subpath_general}
\end{equation}
AFPO does not parameterize an explicit backward policy.  A fixed backward distribution over predecessors is adopted. Since each node has a unique parent, $\pi_B(s_i\mid s_{i+1})=1$ on recorded trajectories. Therefore Eq. (\ref{eq:afpo_subpath_general}) reduces to
\begin{equation}
F(s_n)=F(s_m)\prod_{i=m}^{n-1}\pi_\theta(a_i\mid s_i).
\label{eq:afpo_subpath}
\end{equation}
Replacing $F(s_m)$ and $F(s_n)$ in Eq.~(\ref{eq:afpo_subpath}) with the state-flow definition $F(s)=\widehat{Q}(s)V_\phi(s)$ gives:
\begin{equation}
\widehat{Q}(s_n)V_\phi(s_n)
=
\widehat{Q}(s_m)V_\phi(s_m)\prod_{i=m}^{n-1}\pi_\theta(a_i\mid s_i).
\label{eq:afpo_qv_subpath}
\end{equation}
Taking logarithms gives:
\begin{equation}
\log\!\big(\widehat{Q}(s_n)V_\phi(s_n)\big)
-
\log\!\big(\widehat{Q}(s_m)V_\phi(s_m)\big)
=
\sum_{i=m}^{n-1}\log \pi_\theta(a_i\mid s_i).
\label{eq:afpo_log_subpath}
\end{equation}
To streamline the formulation of the learning objective, introduce the shorthand:
\begin{equation}
\Delta F_{m,n}
=
\log\!\big(\widehat{Q}(s_n)V_\phi(s_n)\big)
-
\log\!\big(\widehat{Q}(s_m)V_\phi(s_m)\big),
\label{eq:afpo_deltaF}
\end{equation}
\begin{equation}
\Delta \pi_{m,n}
=
\sum_{i=m}^{n-1}\log \pi_\theta(a_i\mid s_i).
\label{eq:afpo_deltaPi}
\end{equation}

\paragraph{Policy objective.}
Let $\mathcal{T}$ denote the set of trajectories explored by the search procedure.
Subpath flow balance is enforced by penalizing deviations between $\Delta F_{m,n}$ and $\Delta \pi_{m,n}$ over subpaths:
\begin{equation}
\mathcal{L}_{\mathrm{policy}}
=
\sum_{\tau\in\mathcal{T}}
\sum_{0\le m<n\le T_\tau}
\big(\Delta F_{m,n}-\Delta \pi_{m,n}\big)^2,
\label{eq:afpo_Lpolicy}
\end{equation}
where $T_\tau$ is the horizon of trajectory $\tau$. The loss is applied to all subpath pairs $(m,n)$ within each trajectory, yielding $O(T_\tau^2)$ flow-balance training signals per trajectory. Since the conversation horizon is bounded, the quadratic subpath construction remains tractable in our setting.

\paragraph{Evaluation objective.}
Edge-level preferences are induced by the empirical edge estimates at the same dialogue state. For a visited state $s_t$, a pair of strategies $(a_w,a_l)$ is formed such that:
\begin{equation}
\widehat{Q}(s_t,a_w)>\widehat{Q}(s_t,a_l).
\label{eq:afpo_pref_pair}
\end{equation}
A margin-based ranking loss encourages consistent ordering:
\begin{equation}
\begin{aligned}
\mathcal{L}_{\mathrm{evaluate}}
&=
{E}_{(s_t,a_w,a_l)} \\
&\quad
\max\!\Big(0,\gamma-\big(V_\phi(s_t,a_w)-V_\phi(s_t,a_l)\big)\Big),
\end{aligned}
\label{eq:afpo_Leval}
\end{equation}
where $\gamma>0$ is a margin hyperparameter.
\begin{table*}[htbp]
\centering
\tiny 
\setlength{\tabcolsep}{1.2pt} 
\renewcommand{\arraystretch}{0.4} 
\caption{Automatic evaluation results. \textbf{Bold}: Best fine-tuned result. \textbf{Note}: PPL is omitted for Pre-LLM models due to vocabulary mismatch. Strategy metrics for MoEL/MIME are omitted (-).}
\label{tab:main_results}
\resizebox{\textwidth}{!}{%
\begin{tabular}{ccl | NN | NNNNN | NNN | NNNN | N }
\toprule
 & & & \multicolumn{2}{c|}{General LLMs} & \multicolumn{5}{c|}{Sim, Values \& Empathy} & \multicolumn{3}{c|}{Task \& Interp} & \multicolumn{4}{c|}{Strat \& Know.} & Ours \\
\cmidrule(lr){4-5} \cmidrule(lr){6-10} \cmidrule(lr){11-13} \cmidrule(lr){14-17} \cmidrule(lr){18-18}
Data & Cat & Metric & GPT-4o & \makecell{Claude\\3.5} & MoEL & MIME & \makecell{Social\\Sim} & \makecell{Multi\\Agent} & \makecell{Value\\Reinf} & \makecell{Blender\\Bot-Joint} & ESCoT & \makecell{Intent\\ESC} & MISC & KEMI & EmoDx & DYN-Kn & AFlow \\
\midrule
\noalign{\gdef\NumBg{NumBgA}} 
\multirow{10}{*}{\rotatebox{90}{ExTES}}
 & Fluency & PPL$\downarrow$ & - & - & - & - & 10.2 & 9.85 & 9.90 & 16.5 & 11.2 & 10.5 & 15.6 & 14.8 & 10.1 & 10.3 & \textbf{9.15} \\
 \cmidrule{2-18}
 & \multirow{5}{*}{Gener}
   & B-2$\uparrow$ & 15.8 & 16.5 & 8.50 & 9.20 & 18.5 & 19.8 & 19.1 & 11.5 & 17.5 & 18.2 & 13.8 & 15.5 & 19.5 & 19.2 & \textbf{22.45} \\
 & & B-4$\uparrow$ & 6.55 & 7.25 & 1.85 & 2.10 & 6.80 & 7.45 & 6.95 & 2.25 & 5.35 & 6.15 & 2.75 & 3.25 & 7.35 & 7.10 & \textbf{9.05} \\
 & & R-L$\uparrow$ & 19.2 & 20.8 & 14.5 & 15.2 & 20.9 & 21.8 & 20.5 & 16.5 & 19.5 & 20.2 & 16.1 & 17.5 & 21.5 & 21.1 & \textbf{24.50} \\
 & & MET$\uparrow$ & 23.8 & 23.5 & 11.2 & 11.8 & 20.1 & 21.1 & 20.8 & 13.2 & 18.8 & 19.5 & 14.5 & 15.8 & 20.9 & 20.5 & \textbf{23.10} \\
 \cmidrule{2-18}
 & \multirow{2}{*}{Divers}
   & D-1$\uparrow$ & 4.05 & 4.85 & 1.25 & 1.35 & 4.10 & 4.25 & 4.15 & 1.75 & 3.65 & 3.85 & 2.95 & 3.30 & 4.35 & 4.20 & \textbf{5.10} \\
 & & D-2$\uparrow$ & 21.8 & 23.8 & 6.50 & 7.20 & 17.8 & 19.5 & 18.5 & 6.85 & 15.8 & 16.5 & 11.8 & 13.5 & 19.1 & 18.8 & \textbf{21.05} \\
 \cmidrule{2-18}
 & \multirow{2}{*}{Strat}
   & Acc$\uparrow$ & 57.8 & 60.5 & - & - & 58.8 & 60.2 & 59.5 & 30.5 & 55.5 & 58.2 & 45.1 & 49.5 & 60.8 & 59.5 & \textbf{64.50} \\
 & & F1$\uparrow$ & 54.8 & 58.8 & - & - & 56.2 & 58.5 & 57.1 & 26.8 & 53.2 & 56.5 & 41.8 & 46.8 & 58.9 & 57.8 & \textbf{62.80} \\
\midrule
\noalign{\gdef\NumBg{NumBgC}}
\multirow{10}{*}{\rotatebox{90}{ESConv}} 
 & Fluency & PPL$\downarrow$ & - & - & - & - & 9.85 & 9.45 & 9.60 & 16.4 & 10.5 & 9.95 & 15.6 & 14.8 & 9.55 & 9.80 & \textbf{8.95} \\
 \cmidrule{2-18}
 & \multirow{5}{*}{Gener}
   & B-2$\uparrow$ & 16.2 & 17.1 & 8.80 & 9.50 & 19.5 & 20.5 & 19.8 & 12.1 & 18.2 & 19.1 & 14.5 & 16.1 & 20.1 & 19.8 & \textbf{22.85} \\
 & & B-4$\uparrow$ & 6.85 & 7.55 & 1.95 & 2.20 & 7.25 & 7.85 & 7.30 & 2.53 & 5.62 & 6.95 & 2.96 & 3.43 & 7.62 & 7.40 & \textbf{9.25} \\
 & & R-L$\uparrow$ & 19.8 & 21.5 & 14.8 & 15.5 & 21.6 & 22.5 & 21.2 & 17.3 & 20.1 & 21.2 & 16.8 & 18.2 & 22.1 & 21.8 & \textbf{24.85} \\
 & & MET$\uparrow$ & 24.1 & 24.2 & 11.5 & 12.1 & 20.9 & 21.5 & 21.0 & 13.8 & 19.2 & 20.5 & 15.1 & 16.2 & 21.3 & 21.0 & \textbf{23.55} \\
 \cmidrule{2-18}
 & \multirow{2}{*}{Divers}
   & D-1$\uparrow$ & 4.12 & 4.95 & 1.30 & 1.40 & 4.25 & 4.45 & 4.30 & 1.85 & 3.85 & 4.15 & 3.12 & 3.45 & 4.50 & 4.30 & \textbf{5.25} \\
 & & D-2$\uparrow$ & 22.1 & 24.8 & 6.80 & 7.50 & 18.5 & 19.5 & 18.9 & 7.20 & 16.5 & 17.8 & 12.5 & 14.1 & 19.8 & 18.9 & \textbf{21.50} \\
 \cmidrule{2-18}
 & \multirow{2}{*}{Strat}
   & Acc$\uparrow$ & 58.4 & 61.2 & - & - & 59.8 & 62.0 & 60.2 & 32.1 & 56.8 & 59.5 & 46.5 & 51.2 & 61.8 & 60.5 & \textbf{65.10} \\
 & & F1$\uparrow$ & 55.2 & 59.5 & - & - & 57.5 & 60.5 & 58.5 & 28.5 & 54.5 & 57.8 & 43.2 & 49.8 & 60.1 & 58.8 & \textbf{63.50} \\
\bottomrule
\end{tabular}
}
\end{table*}

\paragraph{Overall objective.}
The AFPO training objective is
\begin{equation}
\mathcal{L}_{\mathrm{AFPO}}
=
\mathcal{L}_{\mathrm{policy}}
+
\lambda\,\mathcal{L}_{\mathrm{evaluate}},
\label{eq:afpo_Ltotal}
\end{equation}
where $\lambda$ controls the trade-off between subpath flow-balance supervision and edge-level preference supervision.

\paragraph{Training Procedure.}
Training of $\pi_\theta$ and $V_\phi$ starts after obtaining the searched trajectories $\mathcal{T}$ and the associated estimates $\widehat{Q}(s)$ and $\widehat{Q}(s,a)$, as described in section~\ref{subsec:mcts}.
Using these pre-computed estimates, model parameters are optimized by minimizing the overall AFPO objective $\mathcal{L}_{\mathrm{AFPO}}$, which combines subpath flow-balance supervision with edge-level consistency induced by $\widehat{Q}(s,a)$. This training enforces prefix-level affective-flow consistency across full trajectories, enabling context-aware evaluation of candidate next strategies while preserving generation diversity (see Appendix~C for a proof).

\subsection{Affective Reasoning}
Given a dialogue state $s_t$ (the conversation history up to turn $t$), a compact candidate set $\mathcal{A}_t\subset\mathcal{A}$ is constructed by selecting the top-$K$ strategies under the policy prior $\pi_\theta(\cdot\mid s_t)$. Each candidate strategy is scored by combining the policy prior with the learned action value:
\begin{equation}
\mathrm{score}(a\mid s_t)
=\log \pi_\theta(a\mid s_t) + V_\phi(s_t,a).
\label{eq:infer_score}
\end{equation}
This scoring combines the policy prior for calibrated exploration with the learned value signal for preference-aware strategy selection.
The selected strategy is
\begin{equation}
a_t=\arg\max_{a\in\mathcal{A}_t}\mathrm{score}(a\mid s_t).
\label{eq:infer_select}
\end{equation}
The supporter response is then generated conditional on $(s_t,a_t)$ by the supporter model. This design combines the policy prior with learned value guidance to improve the reliability of strategy selection at affective reasoning.

\section{Experiments}
\label{sec:experiments}
In this section, we present the experimental setup and results. We address the following research questions (RQs): RQ1: Does AFlow outperform other methods? RQ2: Can AFlow’s affective flow signal distinguish fine-grained affective states in multi-turn ESC? RQ3: Is AFlow effective across dialogue stages (Exploration, Comforting, and Action)? RQ4: Does AFlow show a stable, improving turn-wise affective flow trajectory? RQ5: Are AFlow’s supportive strategy choices reasonable and diverse across turns? RQ6: Does the main component of AFlow work effectively?

\subsection{Datasets}
To evaluate the method, training and evaluation are conducted using two in-domain ESC datasets as seed data. The two datasets include ExTES~\cite{zheng2023building} and ESConv~\cite{liu-etal-2021-towards}. To reduce the strategy search space and improve computational efficiency, we merge semantically similar strategies into eight categories. For seed initialization, we sample 1{,}000 and 500 dialogues from ExTES and ESConv, respectively, ensuring coverage of all emotional support scenarios. Detailed dataset statistics  and preprocessing are provided, and the sampling criteria and procedure are described in Appendix~D. 

\subsection{Baselines}
We compare \textsc{AFlow} with representative baselines spanning three families: (i) task and interpretability-oriented ESC systems, including BlenderBot-Joint~\cite{liu-etal-2021-towards}, ESCoT~\cite{zhang2024escot}, and IntentionESC~\cite{zhang-etal-2025-intentionesc}; (ii) strategy- and knowledge/persona-enhanced models, including MISC~\cite{tu-etal-2022-misc}, KEMI~\cite{deng2023knowledge}, EmoDynamiX~\cite{wan2025emodynamix}, and dynamic knowledge filtering with persona extraction~\cite{hao2025enhancing}; and (iii) test-time LLM frameworks, simulation/values-driven methods, and classic empathetic generators, including MultiAgentESC~\cite{xu-etal-2025-multiagentesc}, SocialSim~\cite{chen2025socialsim}, Value Reinforcement~\cite{kim2025dialogue}, MoEL~\cite{lin2019moel}, MIME~\cite{majumder2020mime},as well as strong proprietary LLM baselines (GPT-4o and Claude-3.5). Details are in Appendix~E.

\subsection{Evaluation Metrics}
Automatic metrics are reported for quality, diversity, fluency, and strategy alignment, together with pairwise preference evaluation using GPT-5.2 and human experts. Detailed settings are provided in Appendix~F.

\subsection{Implementation Details.}
AFlow is implemented on Qwen-2.5-7B-Instruct~\cite{yang2024qwen2} and extended to LLaMA-3.1-8B-Instruct~\cite{dubey2024llama} and Gemma-2~\cite{gemma2024} for generalizability. Training is iterative: the backbone drives MCTS to build affective trajectory trees in a role-separated environment, and the resulting tree data optimize $(\pi_{\theta}, V_{\phi})$ under the AFPO objective. Appendix~G details the procedure, including average turns and strict isolation from training-time MCTS trees.

\begin{table}[t]
\centering
\small 
\setlength{\tabcolsep}{2.0pt} 
\renewcommand{\arraystretch}{1.1}
\setlength{\aboverulesep}{0pt} 
\setlength{\belowrulesep}{0pt}

\caption{Robustness analysis of AFlow across diverse LLM backbones.}
\label{tab:generalization}

\resizebox{\columnwidth}{!}{%
\begin{tabular}{l l | cccc | cc | cc }
\toprule

\multirow{2}{*}{\textbf{Environment}} & \multirow{2}{*}{\textbf{Backbone}} & \multicolumn{4}{c}{\textbf{Generation Quality}} & \multicolumn{2}{c}{\textbf{Diversity}} & \multicolumn{2}{c}{\textbf{Strategy}} \\
\cmidrule(lr){3-6} \cmidrule(lr){7-8} \cmidrule(lr){9-10}
 & & B-2 & B-4 & R-L & MET & D-1 & D-2 & Acc & F1 \\
\midrule
\multirow{3}{*}{\textbf{GPT-4o}} 
 & Qwen-2.5 & \textbf{22.45} & \textbf{9.05} & \textbf{24.50} & \textbf{23.10}  & \textbf{5.10} & \textbf{21.05} & \textbf{64.5} & \textbf{62.8} \\
 & Gemma-2 & 22.05 & 8.80 & 23.95 & 22.65 & 4.95 & 20.40 & 63.8 & 62.1 \\
 & LLaMA-3.1 & 21.60 & 8.45 & 23.40 & 22.15 & 4.80 & 19.80 & 63.0 & 61.5 \\
\midrule

\multirow{3}{*}{\textbf{Claude-3.5}} 
 & Qwen-2.5 & \textbf{22.70} & \textbf{9.20} & \textbf{24.85} & \textbf{23.45} & \textbf{5.22} & \textbf{21.40} & \textbf{65.2} & \textbf{63.5} \\
 & Gemma-2 & 22.30 & 8.95 & 24.25 & 22.95  & 5.05 & 20.75 & 64.4 & 62.8 \\
 & LLaMA-3.1 & 21.85 & 8.65 & 23.80 & 22.50 & 4.92 & 20.15 & 63.6 & 62.2 \\
\midrule

\multirow{3}{*}{\textbf{Gemini-1.5}} 
 & Qwen-2.5 & \textbf{22.35} & \textbf{8.95} & \textbf{24.40} & \textbf{23.00} & \textbf{5.05} & \textbf{20.90} & \textbf{64.3} & \textbf{62.5} \\
 & Gemma-2 & 21.95 & 8.70 & 23.85 & 22.55 & 4.90 & 20.25 & 63.5 & 61.8 \\
 & LLaMA-3.1 & 21.50 & 8.35 & 23.25 & 22.05 & 4.75 & 19.65 & 62.8 & 61.2 \\

\bottomrule
\end{tabular}%
}
\vspace{-10pt} 
\end{table}

\subsection{Main Results (RQ1)}
As summarized in Table~\ref{tab:main_results}, automatic evaluation on the expanded validation dialogues seeded from ESConv and ExTES shows that AFlow achieves the strongest strategy alignment (F1) and higher response diversity (Dist-2), while maintaining competitive generation quality with low perplexity. Table~\ref{tab:generalization} reports performance of AFlow on three instruction-tuned backbones under simulation environments, where each environment instantiates the seeker/supporter/rewarder models from the same LLM family (GPT-4o, Claude-3.5, Gemini-1.5), indicating stable performance across environments and backbone models. In addition, pairwise evaluations with GPT-5.2 and human experts (Table~\ref{tab:human_eval}) show that AFlow improves fluency, identification, comfort, suggest, and overall  over all baselines, while remaining competitive against GPT-4o with consistent trends across both judges. the detailed evaluation rubric and prompts are provided in Appendix~H. Overall, these results highlight three advantages of AFlow: (i) stable gains in both automatic and preference-based evaluations; (ii) improved strategy alignment and diversity without sacrificing fluency; and (iii) robust effectiveness across backbone LLMs and evaluation judges.

\begin{table}[t]
\centering
\tiny 
\setlength{\tabcolsep}{3.5pt} 
\renewcommand{\arraystretch}{0.5} 

\caption{Pairwise preference evaluation (\textbf{Win/Tie/Lose \%}). Comparing AFlow against baselines using GPT-5.2 and Human Experts.}
\label{tab:human_eval}
\resizebox{\columnwidth}{!}{
\begin{tabular}{ll TTT | WWW}
\toprule
\rowcolor{white}
\multirow{2}{*}{\textbf{Opponent}} & \multirow{2}{*}{\textbf{Metric}} & \multicolumn{3}{c|}{\textbf{GPT-5.2 Evaluation}} & \multicolumn{3}{c}{\textbf{Human Evaluation}} \\
\cmidrule(lr){3-5} \cmidrule(lr){6-8}
\rowcolor{white}
 & & \textbf{Win} & \textbf{Tie} & \textbf{Lose} & \textbf{Win} & \textbf{Tie} & \textbf{Lose} \\
\midrule

\multirow{5}{*}{\shortstack[l]{\textbf{vs. ESCoT}}} 
 & Fluency & 35.5 & 55.2 & 9.3 & 32.0 & 58.5 & 9.5 \\
 & Identification & \textbf{58.5} & 28.2 & 13.3 & \textbf{55.0} & 30.5 & 14.5 \\
 & Comfort & \textbf{65.2} & 20.5 & 14.3 & \textbf{60.5} & 24.0 & 15.5 \\
 & Suggest & \textbf{55.4} & 26.8 & 17.8 & \textbf{52.5} & 29.0 & 18.5 \\
 & \textbf{Overall} & \textbf{62.5} & 22.5 & 15.0 & \textbf{58.8} & 25.2 & 16.0 \\
\cmidrule{1-8}

\multirow{5}{*}{\shortstack[l]{\textbf{vs. EmoDx}}} 
 & Fluency & 30.5 & 62.0 & 7.5 & 28.5 & 64.0 & 7.5 \\
 & IIdentification & \textbf{50.5} & 35.5 & 14.0 & \textbf{48.0} & 37.5 & 14.5 \\
 & Comfort & 45.5 & 38.2 & 16.3 & 42.5 & 40.5 & 17.0 \\
 & Suggest & \textbf{68.5} & 18.5 & 13.0 & \textbf{64.2} & 21.5 & 14.3 \\
 & \textbf{Overall} & \textbf{58.2} & 25.5 & 16.3 & \textbf{54.5} & 28.5 & 17.0 \\
\cmidrule{1-8}

\multirow{5}{*}{\shortstack[l]{\textbf{vs. GPT-4o}}} 
 & Fluency & 15.5 & 70.2 & 14.3 & 12.5 & 74.0 & 13.5 \\
 & Identification & \textbf{48.5} & 38.5 & 13.0 & \textbf{45.5} & 41.5 & 13.0 \\
 & Comfort & \textbf{48.2} & 36.5 & 15.3 & \textbf{45.8} & 38.5 & 15.7 \\
 & Suggest & \textbf{52.5} & 32.5 & 15.0 & \textbf{49.2} & 35.5 & 15.3 \\
 & \textbf{Overall} & \textbf{50.5} & 33.5 & 16.0 & \textbf{48.5} & 35.2 & 16.3 \\
\cmidrule{1-8}

\multirow{5}{*}{\shortstack[l]{\textbf{vs. Claude}\\\textbf{3.5}}} 
 & Fluency & 12.8 & 75.5 & 11.7 & 10.5 & 78.0 & 11.5 \\
 & Identification & 40.2 & 48.5 & 11.3 & 38.5 & 50.5 & 11.0 \\
 & Comfort & \textbf{46.5} & 38.0 & 15.5 & \textbf{44.2} & 40.0 & 15.8 \\
 & Suggest & \textbf{50.8} & 33.2 & 16.0 & \textbf{48.5} & 35.0 & 16.5 \\
 & \textbf{Overall} & \textbf{48.8} & 36.2 & 15.0 & \textbf{46.5} & 38.0 & 15.5 \\

\bottomrule
\end{tabular}%
}
\vspace{-10pt} 

\end{table}
\subsection{Affective State Discrimination (RQ2).}
The comparisons of AFlow with two MCTS-based variants (SFT+MCTS and DPO+MCTS) evaluate whether AFPO sub-trajectory balance supervision enables the learned value signal to separate higher- and lower-terminal-reward branches.
Figure~\ref{fig:rq2-fig}a reports the value-gap distribution at matched states $s_t$,
$\Delta Q = Q(s_t,a_{\text{good}}) - Q(s_t,a_{\text{bad}})$,
where $a_{\text{good}}$ and $a_{\text{bad}}$ correspond to branches with higher and lower terminal rewards.
Figure~\ref{fig:rq2-fig}b further quantifies turn-wise separation by
$\Delta Q^{*}_t = Q^{+}_t - Q^{-}_t$,
where $Q^{+}_t$ and $Q^{-}_t$ are the average value estimates of higher- and lower-terminal-reward branches at turn $t$.

\begin{figure}[t]
    \centering
    \includegraphics[width=1.0\linewidth]{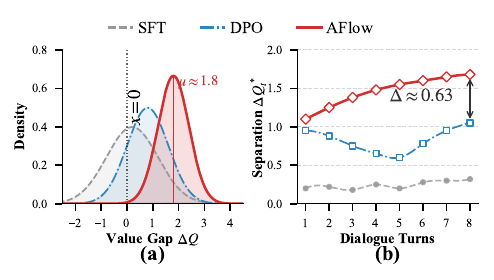}
    \caption{\textbf{(a)} Value-gap distribution. AFlow exhibits a stronger positive shift than DPO. \textbf{(b)} Turn-wise separation. AFlow provides steadier guidance across turns.}
    \label{fig:rq2-fig}
    \vspace{-10pt} 
\end{figure}

\subsection{Effectiveness in Dialogue Stages (RQ3).}
As shown in Figure~\ref{fig:rq3-fig}, stage-wise comparisons across three dialogue stages reveal clear trade-offs for baselines, where stronger performance in earlier stages does not consistently carry over to later-stage guidance. In contrast, AFlow maintains consistently strong effectiveness across stages and shows the most reliable performance in the most challenging action stage, indicating superior constructive, problem-oriented support.
\begin{figure}[t]
    \centering
    \includegraphics[width=1.0\linewidth]{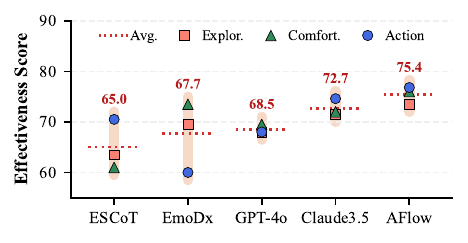}
    \caption{Stage-wise effectiveness scores. AFlow achieves the strongest overall performance.}
    \label{fig:rq3-fig}
    \vspace{-10pt} 
\end{figure}
\subsection{Affective Flow Trajectory Analysis (RQ4)}
The comparisons of AFlow and alternative generators (GPT-4o-mini, Claude3.5, Qwen-2.5-7B-Instruct, LLaMA-3.1-8B-Instruct, and Gemma-2-9B-it) on turn-wise rewarder score trajectories are shown in Figure~\ref{fig:rq4-fig}. As shown in Figure~\ref{fig:rq4-fig}, the trajectories are compared by stability and variance across turns, distinguishing smoother affective improvement from oscillatory dynamics. Figure~\ref{fig:cross_judge} further examines cross-rewarder consistency by re-scoring the same generated dialogues with three heterogeneous rewarder LLMs under an identical rubric, verifying that the turn-wise reward trends remain aligned for AFlow.
\begin{figure}[t]
    \centering
    \includegraphics[width=0.8\linewidth]{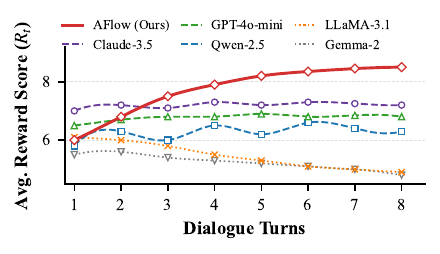}
    \caption{Trajectory analysis of reward scores. AFlow shows robust long-horizon improvement, with competitive late-turn performance.}
    \label{fig:rq4-fig}
    \vspace{-10pt} 
\end{figure}

\begin{figure}[t]
    \centering
    \includegraphics[width=0.8\linewidth]{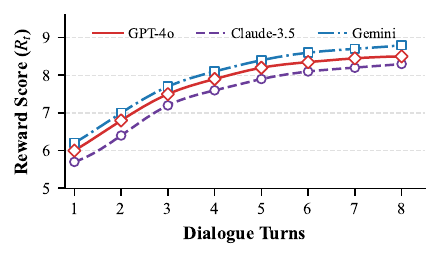}
    \caption{Cross-rewarder consistency. Three heterogeneous LLM rewarders yield aligned upward reward trajectories for AFlow.}
    \label{fig:cross_judge}
    \vspace{-10pt} 
\end{figure}
\subsection{Diversity Analysis: Strategy Entropy (RQ5)}
As shown in Figure~\ref{fig:rq5-fig}, we compare AFlow with human ground truth on turn-level strategy distributions over ESConv and ExTES to assess whether long-horizon strategy diversity is preserved. Human dialogues exhibit a dispersed, high-entropy pattern, whereas mode collapse would concentrate probability mass on a small set of safe actions. AFlow closely follows the expert distribution and maintains broad coverage over all eight therapeutic strategies across turns, suggesting that the flow-based mechanism mitigates strategy collapse.

\begin{figure}[t]
    \centering
    \includegraphics[width=1.0\linewidth]{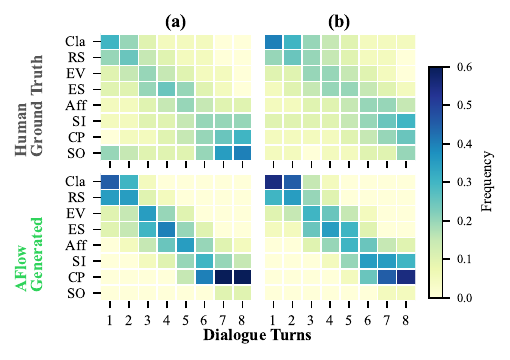}
    \caption{Strategy distribution comparison between Human Ground Truth and AFlow. AFlow preserves expert-level diversity and the Exploration $\rightarrow$ Action progression.}
    \label{fig:rq5-fig}
    \vspace{-10pt} 
\end{figure}

\subsection{Ablation Study (RQ6)}
As shown in Table~\ref{tab:ablation}, AFlow (Full) performs consistently best, highlighting the synergy between sub-path flow-balance supervision, MCTS-based exploration, and process supervision. Removing any component leads to clear performance degradation: replacing sub-path flow balance with DPO reduces Strategy F1, disabling MCTS exploration decreases Dist-2, and removing process supervision yields the largest drop in strategy alignment. Additional analyses are provided in Appendix~I. A case study is also provided in Appendix~J.
\begin{table}[htbp]
\centering
\fontsize{5pt}{6pt}\selectfont
\setlength{\tabcolsep}{2pt} 
\renewcommand{\arraystretch}{0.8}
\vspace{-5pt} 
\caption{Ablation study. The values in parentheses denote the performance drop ($\Delta$) from AFlow. Core components are essential for performance.}
\label{tab:ablation}

\resizebox{\columnwidth}{!}{%
\begin{tabular}{l ccc}
\toprule
\textbf{Model Variant} & \textbf{Strategy (F1)} & \textbf{Diversity (D-2)} & \textbf{Quality (B-4)} \\
\midrule
\rowcolor{gray!10} \textbf{AFlow} & \textbf{62.40} & \textbf{20.12} & \textbf{8.56} \\
\midrule
w/o Flow Balance & 55.10 \textcolor{gray}{(-7.30)} & 16.85 \textcolor{gray}{(-3.27)} & 7.65 \textcolor{gray}{(-0.91)} \\
w/o MCTS & 53.50 \textcolor{gray}{(-8.90)} & 10.20 \textbf{\textcolor{gray}{(-9.92)}} & 6.95 \textcolor{gray}{(-1.61)} \\
w/o Process Sup. & 47.50 \textbf{\textcolor{gray}{(-14.90)}} & 18.50 \textcolor{gray}{(-1.62)} & 7.40 \textcolor{gray}{(-1.16)} \\
\bottomrule
\end{tabular}%
}
\vspace{-10pt} 
\end{table}

\section{Conclusion}
In this work, AFlow is proposed to align LLMs for multi-turn emotional support conversation with affective flow supervision. AFlow distills dense intermediate signals from searched trajectories and jointly trains policy/value models with AFPO by enforcing subpath-level flow-balance, yielding preference-consistent strategy progression across turns. Empirically, the resulting models improve strategy alignment and maintain diverse, demonstrating the effectiveness of flow-shaped process supervision for intermediate credit assignment.


\clearpage

\bibliographystyle{named}
\bibliography{main}

\clearpage
\appendix
\include{appendix}

\end{document}

%% file: appendix.tex
\providecommand{\ph}[1]{\texttt{\{#1\}}}

\newcommand{\choicebtn}[1]{\fbox{\scriptsize\strut\,#1\,}}
\newcommand{\UIsep}{\hspace{6pt}}

\providecommand{\H}[1]{\textbf{#1}}
\providecommand{\ph}[1]{\texttt{#1}}
\providecommand{\tokend}{\texttt{<END>}}
\providecommand{\tokenok}{\texttt{<OK>}}
\providecommand{\CardText}{}

\makeatletter
\@ifundefined{PromptCard}{
  \newenvironment{PromptCard}[3]{
    \par\medskip\noindent\textbf{#3}\par\small\medskip
  }{%
    \par\medskip\normalsize
  }
}{}
\makeatother

\appendix
\section{Related Work}\label{app:related}

\subsection{Emotional Support Conversation}
Emotional Support Conversation (ESC) studies multi-turn interactions where a supporter responds to a distressed seeker with appropriate empathy and supportive actions.
Liu et al.~\cite{liu-etal-2021-towards} formalize ESC and release ESConv, establishing a standard setting for strategy-aware supportive dialogue.
Subsequent work strengthens strategy conditioning and interpretability by explicitly modeling supportive actions and their role in generation.
MISC introduces mixed strategy awareness with external commonsense resources to guide supportive decisions~\cite{tu-etal-2022-misc}, while ESCoT emphasizes interpretable ESC by exposing strategy-related reasoning in the system design~\cite{zhang2024escot}.
Beyond direct strategy prediction, structure-aware modeling and auxiliary supervision signals have been explored to better connect dialogue context with supportive action selection, including lookahead strategy planning~\cite{cheng-etal-2022-improving}, persona-augmented generation~\cite{cheng2023pal}, and knowledge-enhanced mixed-initiative ESC systems~\cite{deng2023knowledge}.
These approaches inject useful inductive bias, but their supervision is typically localized (e.g., per-turn labels or auxiliary modules) and does not necessarily provide a globally consistent mechanism for propagating downstream utility to earlier dialogue prefixes.

\subsection{LLM Supporters}
Recent work increasingly adopts large language models as ESC supporters, leveraging prompting and scalable data construction to shape response behavior.
Self-/synthetic-data augmentation has been explored to improve ESC models using LLM-generated interactions, such as self-chats for synthetic supervision~\cite{zheng-etal-2024-self}.
At the same time, analyses show that LLM supporters can exhibit systematic preference biases and misalignment in ESC, motivating targeted prompting and mitigation techniques~\cite{kang2024can}.
Beyond single-agent prompting, simulation and collaboration-based paradigms have been studied, including socialized simulation settings~\cite{chen2025socialsim} and LLM-based multi-agent collaboration for ESC~\cite{xu-etal-2025-multiagentesc}.
Evaluation methodology has also progressed, with ESC-Eval providing systematic assessment protocols for LLM performance in ESC settings~\cite{zhao2024esc}.
Overall, LLM-centric approaches improve fluency and scalability, but they often rely on coarse control signals and may still struggle with long-horizon strategy consistency.

\subsection{Preference Alignment}
Preference-based alignment provides a flexible mechanism to steer generation without explicit strategy labeling.
General alignment methods such as RLHF~\cite{ouyang2022training} and DPO~\cite{rafailov2023direct} optimize models with comparison signals, and recent formulations further simplify preference optimization (e.g., SimPO)~\cite{meng2024simpo}.
In ESC, preference signals have been used to align responses with human judgments and to refine strategy-related behavior, including chain-style strategy optimization~\cite{zhao2025chain}.
However, many preference-driven schemes remain response-level or dialogue-level: they under-specify intermediate strategy decisions and therefore provide limited long-horizon credit assignment for multi-turn support.
Related observations highlight the need for more fine-grained and process-aware learning signals when optimizing with preferences.

\subsection{Strategy Refinement}
To strengthen intermediate supervision and improve long-horizon consistency, strategy-aware refinements introduce additional structure via planning, dynamics modeling, or explicit optimization.
Lookahead strategy planning explicitly anticipates future dialogue evolution when selecting supportive actions~\cite{cheng-etal-2022-improving}.
Dynamics modeling provides structured signals beyond single-turn labels, such as modeling mixed emotions and discourse dynamics for strategy prediction and generation~\cite{wan2025emodynamix}.
Reinforcement-learning-based objectives have also been explored for multi-turn ESC, including positive emotion elicitation as an explicit target~\cite{zhou-etal-2023-facilitating}.
Complementary directions include steering LLMs for long emotional support conversations~\cite{madani2025steering} and dynamic knowledge filtering with persona extraction for ESC~\cite{hao2025enhancing}.
While these approaches supply stronger process guidance than purely response-level preferences, they typically require additional components and still do not enforce a globally consistent value propagation mechanism across dialogue subpaths.

\subsection{AFlow Positioning}
AFlow is motivated by the gap above: multi-turn ESC benefits from dense, prefix-level supervision that can be propagated consistently through dialogue prefixes to support reliable long-horizon strategy alignment.
The formulation leverages flow-balance principles~\cite{bengio2023gflownet} and derives prefix-level affective supervision from search-constructed trajectories.
By enforcing subpath flow-balance constraints, AFlow provides a structurally consistent credit assignment signal for intermediate strategy decisions beyond coarse sequence-level preference optimization.

\section{Simulation Environment and Prompt}
\label{app:environment}

This appendix specifies the role-separated simulation environment and the prompts used for dialogue expansion and turn-level scoring.
The design focuses on (i) strict turn-level controllability (length and termination), (ii) suppressing meta artifacts that confound strategy comparison, and (iii) grounding content in dialogue history so that differences are attributable to strategy execution rather than stylistic noise.

\subsection{Role prompts and constraints}
\label{app:B1_prompt_cards}

\noindent\textbf{Seeker role.}
The Seeker prompt instantiates a persona-grounded user simulator for role-separated dialogue expansion.
It constrains the model to produce a single user utterance under a strict length budget and forbids any meta references (e.g., systems, evaluation, or role labels), which reduces template artifacts and stabilizes turn-to-turn dynamics.
The termination tokens (\tokend, \tokenok) are used as explicit stop conditions to control interaction length and prevent unbounded rollouts during search.

\begin{PromptCard}{PCGreen}{PCGreenBg}{Seeker Prompt}
\CardText
\H{System Role:} You are a simulated human help-seeker (\emph{Seeker}) in a psychological support dialogue evaluation.

\H{Core Requirement:} Simulate a specific persona grounded in provided data (not a generic assistant). Internalize emotion and linguistic style, but do \emph{not} copy Memory verbatim.

\H{Inputs:}
\begin{itemize}
  \item Context Scene: \ph{scene}
  \item Core Background (private): \ph{description} (reveal gradually; \H{max 1 new detail/turn})
  \item Memory (private): \ph{memory} (infer emotions/facts; do not quote or closely paraphrase)
  \item Dialogue History: \ph{chat\_history}
\end{itemize}

\H{Output Rules (Hard):}
\begin{itemize}
  \item Output only the next Seeker utterance or the exact token \tokend. No prefixes, no meta text.
  \item Length: \H{$\le$ 20 words} (space-separated).
  \item Do not mention AI, system prompts, evaluation, policies, or role labels.
\end{itemize}

\H{Style \& Behavior:} Informal, colloquial; emotion may be unstable. Maintain guardedness. If misunderstood, you may push back, deflect, or go quiet.

\H{Memory Safety:} Never repeat any full sentence from Memory. Avoid reproducing any contiguous 8+ word span.

\H{Conversation Length Control (Hard):} Infer Seeker-turn count from \ph{chat\_history}. You must clearly express your core situation/problem by Seeker turn 8--10. If already at 10 turns, do not add major new details.

\H{Ending Rules:}
\begin{itemize}
  \item If relieved/calm/understood: end naturally (brief thanks + sign-off) and append \tokenok.
  \item Use \tokend only if you disengage, or it becomes awkward/no more believable content, or the other side feels generic/robotic.
\end{itemize}

\H{Task:} Produce the next natural Seeker response based on the dialogue history above.
\end{PromptCard}

\noindent\textbf{Supporter role.}
The Supporter prompt enforces strategy-conditioned generation by taking a target strategy and its definition as private guidance and requiring one natural, spoken response that realizes the strategy within a single turn.
Hard constraints on output length, together with prohibitions on meta language, clinical/diagnostic jargon, and overly formal customer-service phrasing, are introduced to reduce stylistic confounds across strategies and to improve comparability of candidate actions during search and training.
The prompt further restricts content to information supported by the dialogue history, ensuring that variations primarily reflect strategy execution rather than hallucinated details.

\begin{PromptCard}{PCOrange}{PCOrangeBg}{Supporter Prompt}
\CardText
\H{System Role:} You are a simulated human emotional supporter (\emph{Supporter}) in a face-to-face psychological support dialogue evaluation.

\H{Core Requirement:} Follow the assigned strategy precisely while sounding natural and spoken (not robotic/clinical). You are not an AI assistant.

\H{Inputs:}
\begin{itemize}
  \item Target Strategy: \ph{strategy}
  \item Strategy Definition (private guidance): \ph{strategy\_detail} (internalize; do not recite)
  \item Dialogue History: \ph{chat\_history}
\end{itemize}

\H{Output Rules (Hard):}
\begin{itemize}
  \item Output only the next Supporter utterance. No labels/headers/quotes/meta.
  \item Length: \H{$\le$ 25 words} (space-separated; contractions count as one). If over, rewrite shorter.
\end{itemize}

\H{Prohibitions:}
\begin{itemize}
  \item No AI/prompts/evaluation/strategy meta.
  \item No clinical/diagnostic jargon.
  \item No formal customer-service phrasing (e.g., ``I apologize for the inconvenience'').
\end{itemize}

\H{Content Rules:}
\begin{itemize}
  \item Do not invent facts; use only what is said or clearly implied in \ph{chat\_history}.
  \item Implement the target strategy in this single turn; do not explain it.
  \item Prefer reflect/validate + one small supportive move; at most one ``?''.
\end{itemize}

\H{Task:} Produce the next Supporter response that best executes \ph{strategy} under all constraints.
\end{PromptCard}

\noindent\textbf{Rewarder role.}
The Rewarder prompt specifies an impartial rater that evaluates only the most recent Supporter utterance with respect to a fixed set of criteria.
The rubric is operationalized as integer scores from 1 to 5 without free-form explanations, which mitigates verbosity and positional biases and improves the consistency of turn-level reward signals.
These scores serve as the immediate reward in the role-separated environment and are subsequently used as supervision for downstream optimization.

\begin{PromptCard}{PCBlue}{PCBlueBg}{Rewarder Prompt}
\CardText
\H{System Role:} You are an expert psychologist serving as an impartial judge for a psychological support dialogue evaluation.

\H{Core Requirement:} Score only the Supporter’s \H{most recent} utterance using only the provided criteria. Minimize positional bias and length bias; judge psychological effectiveness, not verbosity.

\H{Inputs:}
\begin{itemize}
  \item Evaluation Criteria (ordered list): \ph{evaluation\_criteria}
  \item Dialogue History: \ph{chat\_history}
\end{itemize}

\H{Scoring Rules (Hard):}
\begin{itemize}
  \item Integer scores only: 1--5 (1=lowest, 5=highest).
  \item Evaluate only the last Supporter turn; use the rest only as context for the Seeker’s needs.
  \item Do not reward/penalize length.
  \item Output only scores; no explanations, no extra text.
\end{itemize}

\H{Output Format (Hard):} One line per criterion, same order as \ph{evaluation\_criteria}, exactly:
\texttt{<Question number>: <Score>}.

\H{Task:} Output the scores only, exactly in the required format.
\end{PromptCard}

\subsection{Reward dimensions and scoring rubric}
\label{app:B2_eval_rubrics}

This appendix defines the four evaluation dimensions used by the Rewarder and provides the corresponding 5-point rubrics.
Each dimension is specified by (i) an operational definition, (ii) a brief theoretical grounding, and (iii) discrete behavioral anchors for scores 1--5, enabling consistent turn-level assessment across models and strategies.

\subsubsection{Empathy}
\label{app:rubric_empathy}
\noindent\textbf{Definition.}
The ability of the system to perceive, understand, and validate the user’s emotional state and underlying needs.

\noindent\textbf{Theoretical basis.}
Rogers’ Person-Centered Therapy, emphasizing empathic understanding and unconditional positive regard.

\noindent\textbf{Rubric (1--5).}
\begin{description}
  \item[\textbf{5}] \textbf{Profound empathy.} Deep cognitive and affective empathy; validates both surface emotions and implied feelings, and accurately captures the underlying logic of distress. Tone is consistently supportive and non-judgmental.
  \item[\textbf{4}] \textbf{High empathy.} Correctly identifies and validates primary emotions; provides comfort and demonstrates clear perspective-taking, establishing a solid supportive stance.
  \item[\textbf{3}] \textbf{Moderate empathy.} Acknowledges emotions but remains superficial or generic; contextual grounding is limited, or emotional tone is mildly mismatched.
  \item[\textbf{2}] \textbf{Low empathy.} Misreads emotions or exhibits ``toxic positivity''; over-focuses on facts while neglecting emotional needs.
  \item[\textbf{1}] \textbf{Absence of empathy.} Apathy, judgment, or invalidation; responses are cold, dismissive, or potentially harmful.
\end{description}

\subsubsection{Information Quality}
\label{app:rubric_info}
\noindent\textbf{Definition.}
The relevance, accuracy, and actionability of the information or advice provided to help the user.

\noindent\textbf{Theoretical basis.}
Psychoeducation principles and effectiveness criteria from Problem-Solving Therapy.

\noindent\textbf{Rubric (1--5).}
\begin{description}
  \item[\textbf{5}] \textbf{Optimal utility.} Highly actionable, specific, and well-tailored suggestions; provides new insights or concrete steps that address the root cause.
  \item[\textbf{4}] \textbf{High utility.} Relevant and helpful suggestions; generally practical, with minor gaps in depth or personalization.
  \item[\textbf{3}] \textbf{Moderate utility.} Safe but generic guidance; theoretically correct yet weakly grounded in the user’s specifics.
  \item[\textbf{2}] \textbf{Low utility.} Vague, confusing, or largely irrelevant suggestions; feasible but unhelpful in context.
  \item[\textbf{1}] \textbf{Counterproductive/harmful.} Incorrect, dangerous, or irrelevant information; following the advice may worsen the situation.
\end{description}

\subsubsection{Humanoid Naturalness}
\label{app:rubric_natural}
\noindent\textbf{Definition.}
The degree to which the system exhibits human-like conversational patterns while minimizing robotic traits.

\noindent\textbf{Theoretical basis.}
Turing-style criteria: linguistic diversity, coherence, and the absence of template-like artifacts.

\noindent\textbf{Rubric (1--5).}
\begin{description}
  \item[\textbf{5}] \textbf{Indistinguishable from human.} Seamless flow with natural variation in phrasing; no template markers or robotic formatting.
  \item[\textbf{4}] \textbf{Highly natural.} Organic and fluid conversation; minor formality may appear, but persona remains consistent and engaging.
  \item[\textbf{3}] \textbf{Detectable AI traits.} Polite but overly formal/structured; occasional repetition or ``textbook'' phrasing reveals model-like patterns.
  \item[\textbf{2}] \textbf{Robot-like.} Stiff, repetitive, or templated responses; tone is clinical and monotonous.
  \item[\textbf{1}] \textbf{Machine output.} Disjointed or artifact-laden output that breaks conversational realism.
\end{description}

\subsubsection{Strategic Efficacy}
\label{app:rubric_strategy}
\noindent\textbf{Definition.}
The appropriateness and effectiveness of the psychological support strategy relative to the user’s current state.

\noindent\textbf{Theoretical basis.}
Phase models of psychotherapy, emphasizing intervention matching across exploration, insight, and action stages.

\noindent\textbf{Rubric (1--5).}
\begin{description}
  \item[\textbf{5}] \textbf{Masterful execution.} Strategy choice is well-timed and well-executed; facilitates movement from distress toward relief/insight; demonstrates flexible, coherent strategy combination when appropriate.
  \item[\textbf{4}] \textbf{Effective execution.} Appropriate strategy and correct implementation; contributes positively, with limited nuance relative to the top tier.
  \item[\textbf{3}] \textbf{Basic execution.} Recognizable strategy but generic or clumsy execution; fits context but does not deepen the interaction.
  \item[\textbf{2}] \textbf{Misaligned strategy.} Strategy choice does not match user needs; intervention feels premature or forced.
  \item[\textbf{1}] \textbf{Counterproductive strategy.} Strategy choice escalates distress or undermines support; approach is unsuitable for ESC.
\end{description}

\subsection{Strategy action set $\mathcal{A}$}
\label{app:B3_strategies}

Table~\ref{tab:strategy-inventory} lists the Supporter strategy inventory used throughout this work.
Each strategy is associated with a short abbreviation, which is used consistently in figures, algorithms, and annotations.
The definitions are written to be executable within a single turn, so that each action corresponds to a concrete and comparable support move during search and training.

\begin{table*}[t]
\centering
\footnotesize
\setlength{\tabcolsep}{5pt}
\renewcommand{\arraystretch}{1.08}
\caption{Supporter strategy set used in this work.}
\label{tab:strategy-inventory}
\begin{tabular}{@{}
  >{\raggedright\arraybackslash}p{0.22\textwidth}
  >{\centering\arraybackslash}p{0.08\textwidth}
  >{\raggedright\arraybackslash}p{0.66\textwidth}
@{}}
\toprule
\textbf{Strategy} & \textbf{Abbr.} & \textbf{Definition} \\
\midrule
Reflective Statements & RS &
Briefly restate or paraphrase the seeker’s key message in your own words to show understanding. Stay faithful to their meaning; do not add new facts or solutions. \\
Clarification & Cla &
Ask one short, specific question to clarify an ambiguous detail or the seeker’s goal/need. Keep it gentle and avoid interrogating or stacking multiple questions. \\
Emotional Validation & EV &
Acknowledge the seeker’s feelings as understandable given their situation. Validate the emotion without agreeing with questionable beliefs or judging their choices. \\
Empathetic Statements & ES &
Express warm empathy by reflecting the seeker’s emotional experience and what it might feel like for them. Focus on being with them, not fixing the problem. \\
Affirmation & Aff &
Highlight the seeker’s strengths, efforts, or values in a specific, believable way. Reinforce agency and self-efficacy without sounding exaggerated or dismissive. \\
Collaborative Planning & CP &
Work with the seeker to choose a small next step that matches their readiness. Offer to brainstorm together and respect their autonomy; avoid prescribing. \\
Suggest Options & SO &
Offer a few concrete, low-pressure options the seeker can choose from. Keep choices realistic and aligned with the situation; avoid overwhelming them. \\
Share Information & SI &
Provide concise, relevant information or a simple concept that helps the seeker understand what’s happening. Keep it accurate, non-clinical, and tied to their context. \\
\bottomrule
\end{tabular}
\end{table*}

\section{Theoretical Analysis of Affective Flow Consistency}
\label{app:afpo-theory}

This section provides a theoretical justification for why the Affective-Flow Preference Optimization (AFPO)
objective preserves diversity and induces prefix-level affective-flow consistency.
Unlike reward-maximization objectives that are mode-seeking, AFPO inherits the distribution-matching property of
flow-balance training and therefore maintains non-trivial entropy in the learned generation distribution.

\paragraph{Setup.}
Let a complete multi-turn trajectory be $\tau=(s_0,a_0,s_1,a_1,\ldots,s_T)$, where each $s_t$ is a dialogue prefix
(state) and each $a_t\in\mathcal{A}$ is the selected supporter strategy (action).
Let $R(\tau)>0$ denote the terminal affective reward of $\tau$, derived from the search procedure (e.g., the terminal
return induced by Rewarder scores).
AFPO parameterizes the state flow as
\begin{equation}
F(s)=\widehat{Q}(s)V_\phi(s),
\end{equation}
and enforces subpath flow-balance along recorded search trajectories (Eq.~\ref{eq:afpo_subpath}), i.e.,
\begin{equation}
F(s_n)=F(s_m)\prod_{i=m}^{n-1}\pi_\theta(a_i\mid s_i),
\qquad 0\le m<n\le T.
\label{eq:app_subpath_balance}
\end{equation}
This form follows from Eq.~\ref{eq:afpo_subpath_general} under the recorded-trajectory assumption that each state
has a unique parent, hence $\pi_B(s_i\mid s_{i+1})=1$.

\begin{theorem}[Distribution Matching Property of AFPO]
\label{thm:afpo_distmatch_app}
If the subpath flow-balance loss is zero and the terminal flow is matched to the terminal reward, then the induced
policy samples trajectories with probability proportional to their terminal affective reward:
\begin{equation}
\pi_\theta(\tau)\ \propto\ R(\tau).
\label{eq:app_distmatch}
\end{equation}
\end{theorem}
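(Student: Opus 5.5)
Zero loss means the subpath balance \eqref{eq:app_subpath_balance} holds with equality for every pair $0\le m<n\le T$ on every recorded trajectory. The plan is to take the balance between the root and the terminal state, $m=0$ and $n=T$, and combine it with the terminal matching condition. Together these give $\pi_\theta(\tau)$ in closed form.

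\emph{Step 1: the trajectory likelihood.} Under the forward policy, a trajectory that starts at the fixed root $s_0$ has probability $\pi_\theta(\tau)=\prod_{i=0}^{T-1}\pi_\theta(a_i\mid s_i)$. This uses the recorded-trajectory assumption that each state has a unique parent, so a trajectory is determined by its action sequence and $\pi_B\equiv 1$. Setting $m=0$, $n=T$ in \eqref{eq:app_subpath_balance} gives
\begin{equation*}
F(s_T)=F(s_0)\,\pi_\theta(\tau).
\end{equation*}

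\emph{Step 2: terminal matching and normalization.} Terminal matching means $F(s_T)=R(\tau)$, i.e.\ $\widehat{Q}(s_T)V_\phi(s_T)=R(\tau)$. Therefore $\pi_\theta(\tau)=R(\tau)/F(s_0)$. Since $s_0$ is common to all trajectories, $F(s_0)=Z$ is a constant, which gives $\pi_\theta(\tau)\propto R(\tau)$. To identify $Z$, sum over all complete trajectories. Because $\pi_\theta(\cdot\mid s)$ is normalized at every state and every branch ends in a terminal state, $\sum_\tau\pi_\theta(\tau)=1$, hence $Z=\sum_\tau R(\tau)$. This is the usual GFlowNet partition function. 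It is consistent with the conservation law \eqref{eq:afpo_conservation}, which follows from the same edge factorization \eqref{eq:afpo_edge_forward} together with normalization of $\pi_\theta$.

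\emph{Main obstacle: consistency of the intermediate balances.} The zero-loss hypothesis asserts balance on every subpath, including those with $0<m<n<T$, and I need these constraints to be mutually consistent rather than merely assumed. I would check this by noting that all subpath equalities follow from the edgewise relations $F(s_{i+1})=F(s_i)\pi_\theta(a_i\mid s_i)$ by telescoping. I would then show the edgewise relations are achievable by defining
\begin{equation*}
F(s)=\sum_{\tau\ni s} R(\tau)\quad\text{and}\quad \pi_\theta(a\mid s)=F(s')/F(s).
\end{equation*}
In a tree, $F(s)$ equals the sum of $F$ over the children of $s$, so this $\pi_\theta$ is a valid distribution and the loss is realizable. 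A further subtlety is that the statement holds only over the support of the recorded search tree, where $\pi_B=1$. I would state the conclusion for the distribution restricted to that tree, and require $R(\tau)>0$ and $F>0$ so that the logarithms in \eqref{eq:afpo_log_subpath} are defined.
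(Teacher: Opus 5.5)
Your proposal is correct and follows the paper's route. The paper invokes the GFlowNet identity $\pi_\theta(\tau)=F(\tau)/Z$ and substitutes $F(\tau)=R(\tau)$, whereas you derive that identity explicitly from the $m=0$, $n=T$ instance of the subpath balance, identifying $Z$ with the root flow $F(s_0)$; only proportionality is claimed, so your identification $Z=\sum_\tau R(\tau)$ (which needs the recorded tree to carry all of $\pi_\theta$'s mass, as you partly note) and your realizability check of the intermediate balances are optional extras rather than steps the implication depends on.
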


\noindent\textbf{Proof.}
The statement follows from the Generative Flow Network (GFlowNet) distribution-matching principle
When flow conservation holds on all subpaths, the probability of generating a complete trajectory $\tau$ from the
initial state can be written as
\begin{equation}
\pi_\theta(\tau)=\frac{F(\tau)}{Z},
\label{eq:app_pi_flow}
\end{equation}
where $F(\tau)$ is the terminal flow assigned to $\tau$ and $Z=\sum_{\tau'}F(\tau')$ is the total flow (partition
function).
AFPO matches terminal flow to terminal reward, i.e., $F(\tau)=R(\tau)$, thus Eq.~\eqref{eq:app_pi_flow} yields
$\pi_\theta(\tau)=R(\tau)/Z$, proving Eq.~\eqref{eq:app_distmatch}. 

\paragraph{Contrast to reward maximization.}
In contrast, objectives of the form $\max_{\pi} {E}_{\tau\sim\pi}[R(\tau)]$ are maximized when the policy
places all probability mass on a single trajectory $\tau^\star=\arg\max_\tau R(\tau)$, which is mode-seeking and can
lead to collapse onto a small set of repetitive patterns.

\begin{theorem}[Entropy Lower Bound]
\label{thm:afpo_entropy_app}
The distribution matching property of AFPO guarantees a positive lower bound on the entropy of the trajectory
distribution:
\begin{equation}
H(\pi_\theta)\ \ge\ \log\!\Bigg(\frac{\sum_{\tau'} R(\tau')}{\max_{\tau} R(\tau)}\Bigg).
\label{eq:app_entropy_lb}
\end{equation}
\end{theorem}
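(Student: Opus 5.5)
We take Theorem~\ref{thm:afpo_distmatch_app} as given: under zero subpath flow-balance loss with terminal flow matched to reward, the trajectory distribution is $\pi_\theta(\tau)=R(\tau)/Z$, where $Z=\sum_{\tau'}R(\tau')$. The trajectory set is finite, since the horizon is bounded and $|\mathcal{A}|$ is finite. Moreover $R(\tau)>0$, so every trajectory carries strictly positive mass. The entropy bound then reduces to a purely distributional inequality about this specific $\pi_\theta$, and the flow machinery plays no further role.

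The key step is to compare Shannon entropy with min-entropy. Write $H(\pi_\theta)=\sum_\tau \pi_\theta(\tau)\log\frac{1}{\pi_\theta(\tau)}$. For every $\tau$ we have $\pi_\theta(\tau)\le p_{\max}:=\max_{\tau}\pi_\theta(\tau)$. Hence $\log\frac{1}{\pi_\theta(\tau)}\ge \log\frac{1}{p_{\max}}$ termwise, and averaging against $\pi_\theta$ gives $H(\pi_\theta)\ge -\log p_{\max}=H_\infty(\pi_\theta)$. Substituting $p_{\max}=\max_\tau R(\tau)/Z$ from the distribution-matching identity yields
\begin{equation*}
H(\pi_\theta)\ \ge\ \log\frac{Z}{\max_\tau R(\tau)}=\log\!\Bigg(\frac{\sum_{\tau'}R(\tau')}{\max_\tau R(\tau)}\Bigg),
\end{equation*}
which is exactly Eq.~\eqref{eq:app_entropy_lb}.

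To justify the word ``positive'' in the statement, note that $\sum_{\tau'}R(\tau')\ge \max_\tau R(\tau)$, so the bound is nonnegative. It is strictly positive as soon as at least two trajectories have nonzero reward, which holds automatically when $R(\tau)>0$ for all $\tau$ and the trajectory space is not a singleton. We can add the contrast with reward maximization here: a degenerate policy on $\tau^\star$ has entropy $0$, which sits strictly below the bound.

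The main obstacle is not the inequality, which is elementary, but the interpretation of $H(\pi_\theta)$. The bound holds over the full trajectory space on which Theorem~\ref{thm:afpo_distmatch_app} asserts proportionality. If that proportionality only holds on the recorded search trajectories $\mathcal{T}$, the sums should be restricted to $\mathcal{T}$, with $Z$ renormalized accordingly. The argument itself is unchanged. We would state this caveat explicitly rather than attempt to extend the balance condition to unvisited trajectories.
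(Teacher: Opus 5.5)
Your proof is correct and is essentially the paper's argument. The paper substitutes $\pi_\theta(\tau)=R(\tau)/Z$ first, writes $H(\pi_\theta)=\log Z-E_{\tau\sim\pi_\theta}[\log R(\tau)]$, and then bounds $E_{\tau\sim\pi_\theta}[\log R(\tau)]\le\log\max_\tau R(\tau)$; this is exactly your termwise inequality $-\log\pi_\theta(\tau)\ge-\log p_{\max}$ (Shannon entropy dominates min-entropy), applied before rather than after the substitution, and your remarks on strict positivity and on restricting the sums to recorded trajectories are sound additions that do not change the argument.
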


\noindent\textbf{Proof.}
From Theorem~\ref{thm:afpo_distmatch_app}, $\pi_\theta(\tau)=R(\tau)/Z$ with $Z=\sum_{\tau'}R(\tau')$.
Then
\begin{equation}
\begin{aligned}
H(\pi_\theta)
&=-\sum_{\tau}\pi_\theta(\tau)\log \pi_\theta(\tau) \\
&= -\sum_{\tau}\frac{R(\tau)}{Z}\log\frac{R(\tau)}{Z} \\
&= \log Z-\frac{1}{Z}\sum_{\tau}R(\tau)\log R(\tau) \\
&= \log Z-{E}_{\tau\sim\pi_\theta}[\log R(\tau)].
\end{aligned}
\label{eq:app_entropy_expand}
\end{equation}
Moreover, $\log R(\tau)\le \log(\max_{\tau'}R(\tau'))$, hence
${E}_{\tau\sim\pi_\theta}[\log R(\tau)]\le \log(\max_{\tau}R(\tau))$.
Substituting into Eq.~\eqref{eq:app_entropy_expand} gives
\begin{equation}
H(\pi_\theta)\ge \log Z-\log(\max_{\tau}R(\tau))
=\log\!\Bigg(\frac{\sum_{\tau'} R(\tau')}{\max_{\tau} R(\tau)}\Bigg),
\end{equation}
which proves Eq.~\eqref{eq:app_entropy_lb}. If more than one trajectory has non-zero reward, the lower bound is
positive and the policy cannot collapse to a single mode. 

\paragraph{Prefix-level affective-flow consistency induced by AFPO.}
Beyond diversity preservation, AFPO provides a prefix-consistency property that is specific to the dialogue-prefix
formulation.
Define the \emph{log-flow score} of a prefix state by the same quantity used in the AFPO objective, Define the log-flow score of a prefix state as:
\begin{equation}
S(s) = \log\!\big(\widehat{Q}(s)V_\phi(s)\big),
\label{eq:app_score_def}
\end{equation}
For any subpath from $s_m$ to $s_n$ on a recorded trajectory, Eq.~\ref{eq:afpo_log_subpath} in the main text is
equivalent to
\begin{equation}
S(s_n)-S(s_m)=\sum_{i=m}^{n-1}\log\pi_\theta(a_i\mid s_i).
\label{eq:app_score_telescoping}
\end{equation}
This identity states that AFPO enforces an \emph{additive} and \emph{prefix-consistent} decomposition of downstream
flow mass across turns: the change in prefix score equals the accumulated log-likelihood of strategy choices along
that subpath.

\begin{theorem}[Prefix Consistency of Affective Flow]
\label{thm:afpo_prefix_consistency}
Assume the subpath flow-balance loss is zero.
Then for any recorded subpath $(s_m,\ldots,s_n)$, the prefix score is path-consistent in the sense that
$S(s_n)$ is uniquely determined by $S(s_m)$ and the intermediate strategy probabilities through
Eq.~\eqref{eq:app_score_telescoping}.
In particular, the ratio of flows satisfies
\begin{equation}
\frac{F(s_n)}{F(s_m)}=\prod_{i=m}^{n-1}\pi_\theta(a_i\mid s_i),
\label{eq:app_flow_ratio}
\end{equation}
and the log-ratio is additive across turns.
\end{theorem}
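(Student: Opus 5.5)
The plan is to read the result directly off the vanishing of $\mathcal{L}_{\mathrm{policy}}$ in Eq.~(\ref{eq:afpo_Lpolicy}). Since that loss is a sum of squares, it is zero exactly when every summand vanishes. So for every $\tau\in\mathcal{T}$ and every pair $0\le m<n\le T_\tau$ we get $\Delta F_{m,n}=\Delta\pi_{m,n}$. By the definitions in Eqs.~(\ref{eq:afpo_deltaF})--(\ref{eq:afpo_deltaPi}) and the score $S$ of Eq.~(\ref{eq:app_score_def}), this is precisely Eq.~(\ref{eq:app_score_telescoping}) on each recorded subpath. Throughout I will assume that $\widehat{Q}(s)V_\phi(s)>0$ on recorded states, so that $S$ is finite. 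This positivity is implicit in writing the logarithm, and I would state it explicitly as a standing assumption.

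\emph{Uniqueness.} Rearranging gives $S(s_n)=S(s_m)+\sum_{i=m}^{n-1}\log\pi_\theta(a_i\mid s_i)$. The right-hand side depends only on $S(s_m)$ and on the strategy probabilities along the subpath, so $S(s_n)$ is uniquely determined by them.

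\emph{Consistency across decompositions.} Path-consistency also requires that different decompositions of the same subpath give the same answer. Take any intermediate index $m<k<n$. The constraints for $(m,k)$ and for $(k,n)$ add to give the $(m,n)$ constraint, because the sums $\sum\log\pi_\theta$ telescope over the disjoint index ranges $[m,k)$ and $[k,n)$. Hence the all-pairs loss is compatible with the edge-wise constraints $(i,i+1)$: the latter already imply the former by induction on $n-m$. In particular, the prefix score is additive across turns, which is the final claim of the theorem.

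\emph{Flow ratio.} Exponentiating Eq.~(\ref{eq:app_score_telescoping}) and using $F(s)=\widehat{Q}(s)V_\phi(s)=e^{S(s)}$ gives Eq.~(\ref{eq:app_flow_ratio}). Equivalently, this recovers Eq.~(\ref{eq:afpo_subpath}) and hence Eq.~(\ref{eq:app_subpath_balance}) with $\pi_B\equiv1$.

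The main obstacle is well-definedness when a dialogue prefix occurs in more than one recorded trajectory. If such shared states could get inconsistent values, uniqueness might fail. I would resolve this using the tree structure stated in the text: each node has a unique parent, so a shared state $s$ has a unique root-to-$s$ path. Consequently $S(s)=S(s_0)+\sum\log\pi_\theta$ along that path is the same whichever trajectory records it. Since $S$ is a single function of states and not of trajectories, no conflict can arise, and the telescoping identity holds globally on the search tree.
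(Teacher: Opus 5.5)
Your proposal is correct and is essentially the paper's argument run in reverse: the paper divides the multiplicative subpath balance by $F(s_m)$ and then takes logarithms, while you read the log identity directly off the vanishing squared residuals of $\mathcal{L}_{\mathrm{policy}}$ and then exponentiate. Your explicit positivity assumption on $\widehat{Q}(s)V_\phi(s)$ is a worthwhile addition that the paper leaves implicit, since $\widehat{Q}$ can be zero under the paper's rollout and initialization conventions; the shared-state discussion is harmless but unnecessary, because $S$ is defined pointwise as a function of the state.
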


\noindent\textbf{Proof.}
Eq.~\eqref{eq:app_flow_ratio} is exactly Eq.~\eqref{eq:app_subpath_balance} divided by $F(s_m)$.
Taking logarithms and substituting the definition $F(s)=\widehat{Q}(s)V_\phi(s)$ yields
Eq.~\eqref{eq:app_score_telescoping}. 

\paragraph{Implication for stable long-horizon strategy selection.}
The prefix-consistency result above links directly to AFlow's inference rule (Eq.~\ref{eq:infer_score}):
$\mathrm{score}(a\mid s)=\log\pi_\theta(a\mid s)+V_\phi(s,a)$.
While $V_\phi(s)$ anchors state-level flow via $\widehat{Q}(s)V_\phi(s)$, the edge-level ranking objective
(Eq.~\ref{eq:afpo_Leval}) trains $V_\phi(s,a)$ to be consistent with the empirical preference ordering induced by
$\widehat{Q}(s,a)$.
Thus, local decisions combine (i) a calibrated prior over strategies and (ii) an action-wise value signal aligned
with search preferences, and the subpath constraint ensures that these local choices compose coherently over long
horizons through Eq.~\eqref{eq:app_score_telescoping}.

\begin{theorem}[Consistency Supports Long-Horizon Stability]
\label{thm:afpo_stability}
Under exact subpath balance, the prefix score $S(s)$ evolves monotonically with the accumulated log-probability mass
assigned to the realized strategy sequence (Eq.~\eqref{eq:app_score_telescoping}).
Consequently, prefix valuations do not depend on a single terminal decision but are propagated conservatively across
turns, which provides a principled basis for stable long-horizon strategy progression.
\end{theorem}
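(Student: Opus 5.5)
The plan is to read Theorem~\ref{thm:afpo_stability} as a direct consequence of the telescoping identity Eq.~\eqref{eq:app_score_telescoping}, which Theorem~\ref{thm:afpo_prefix_consistency} already gives under exact subpath balance. The informal statement contains two claims. The first is \emph{monotone dependence}: $S(s)$ moves monotonically with the accumulated log-probability. The second is \emph{conservative propagation}: prefix valuations are not set by a single terminal decision. I would make each claim precise and derive it from Eq.~\eqref{eq:app_score_telescoping} with $m=0$ and arbitrary $n$.

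For the first claim, I would fix a recorded trajectory and set $\Lambda_n=\sum_{i=0}^{n-1}\log\pi_\theta(a_i\mid s_i)$. By Theorem~\ref{thm:afpo_prefix_consistency},
\begin{equation*}
S(s_n)=S(s_0)+\Lambda_n .
\end{equation*}
So $S(s_n)$ is an affine function of $\Lambda_n$ with slope one, with the same anchor $S(s_0)$ for every prefix. This makes it strictly increasing in $\Lambda_n$.

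A second monotonicity follows because each summand $\log\pi_\theta(a_i\mid s_i)\le 0$. Hence $S(s_{n+1})-S(s_n)=\log\pi_\theta(a_n\mid s_n)\le 0$, so the flow is nonincreasing along the path. This is the conservation statement: $F(s_{n+1})\le F(s_n)$, with equality iff the policy is deterministic at $s_n$. Flow mass is therefore only split across successors and never created.

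For the second claim, I would invoke Eq.~\eqref{eq:app_flow_ratio} on an arbitrary subpath $(s_m,\dots,s_n)$. It shows $F(s_m)$ is determined by $F(s_n)$ and the intermediate edge probabilities: $F(s_m)=F(s_n)/\prod_{i=m}^{n-1}\pi_\theta(a_i\mid s_i)$. Combined with Theorem~\ref{thm:afpo_distmatch_app}, the terminal reward fixes the terminal flow. Each intermediate prefix then receives its value through the product of all later edge probabilities rather than from the last step alone.

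To quantify "not dependent on a single terminal decision", I would show the sensitivity of $S(s_m)$ to a single edge $j$. With $S(s_n)$ held fixed, $\partial S(s_m)/\partial \log\pi_\theta(a_j\mid s_j)=-1$ for every $m\le j<n$ and $0$ otherwise. Any single decision thus shifts all upstream log-scores by a bounded additive amount, and no one edge controls them.

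The main obstacle is that ``stable long-horizon strategy progression'' is not a formal property. I would handle it by \emph{defining} stability as the two properties above: affine monotone dependence on $\Lambda_n$, and bounded unit sensitivity of upstream scores to each edge. I would then state that the theorem holds in exactly that sense. I would also note explicitly that the argument covers only recorded trajectories with unique parents ($\pi_B\equiv1$), and makes no claim about off-trajectory states.
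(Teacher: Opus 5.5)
Your proposal is correct and follows the paper's route. The paper's proof is the one-line observation that, by Eq.~\eqref{eq:app_score_telescoping}, the change in prefix score between any two turns equals the sum of the intervening local log-probabilities and therefore composes consistently; your affine relation $S(s_n)=S(s_0)+\Lambda_n$ and your nonincreasing-flow reading are faithful elaborations of exactly that argument.

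Two small cautions on the extras:
\begin{itemize}
\item Terminal-flow matching is a \emph{hypothesis} of Theorem~\ref{thm:afpo_distmatch_app}, not a consequence of it. It is not assumed in this theorem, so you should not lean on it.
\item Unit sensitivity in log space does not bound a single edge's contribution $-\log\pi_\theta(a_j\mid s_j)$, which blows up as $\pi_\theta(a_j\mid s_j)\to 0$. The remarks that each decision shifts upstream scores by a ``bounded'' amount and that ``no one edge controls'' the upstream scores should therefore be dropped, since a single low-probability edge can dominate.
\end{itemize}
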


\noindent\textbf{Proof.}
The statement is a direct implication of Eq.~\eqref{eq:app_score_telescoping}: the change in prefix score between
any two turns is exactly the sum of local log-probabilities along the intervening subpath, hence it composes
consistently over the horizon.

\paragraph{Takeaway.}
Theorems~\ref{thm:afpo_distmatch_app}--\ref{thm:afpo_entropy_app} establish that AFPO is distribution-matching and
therefore diversity-preserving, preventing mode collapse by maintaining a non-trivial entropy lower bound.
Theorems~\ref{thm:afpo_prefix_consistency}--\ref{thm:afpo_stability} further formalize prefix-level affective-flow
consistency induced by the AFPO parameterization, which aligns with the empirical strategy-entropy preservation
(Figure.~\ref{fig:rq5-fig}) and the stable turn-wise reward trajectories (Figure.~\ref{fig:rq4-fig},
Figure.~\ref{fig:cross_judge}).

\section{Datasets}
\label{app:data}

\subsection{Seed Dialogue Sampling}
\label{app:data:seed}

Seed dialogues are sampled from ExTES and ESConv. The seed pool contains 1,000 dialogues from ExTES and 500 dialogues from ESConv. Sampling is stratified by dataset-provided scenario/emotion labels to ensure broad label coverage. After coverage constraints are satisfied, the remaining sampling budget is allocated to approximately preserve the original label-frequency distribution.

\subsection{Structured Fields}
\label{app:data:fields}

Each dialogue is normalized into a structured record with the following fields:
\begin{itemize}
  \item scene: scenario description or situational summary.
  \item description: core background/problem statement (private when required).
  \item memory: optional persistent constraints or preferences that are explicitly stated in the original dialogue; otherwise empty.
\end{itemize}

\subsection{MCTS Expansion Data}
\label{app:data:mcts}

For each seed dialogue, one dialogue tree is expanded using the MCTS procedure described in the main method.

\paragraph{Configuration.}
\begin{itemize}
  \item Number of trees: 1,000 (ExTES) and 500 (ESConv).
  \item Maximum tree depth: 10.
  \item Rollout: bounded-depth rollout with maximum depth 3.
\end{itemize}

\paragraph{Recorded trajectories.}
For each tree, we extract trajectories from the recorded search process. We report an average of 10 trajectories per tree, counting only trajectories with more than 3 nodes. The average trajectory length is 8.25 turns (computed over retained trajectories).

\paragraph{Search conventions.}
\begin{itemize}
  \item Tree expansion stops when reaching the maximum depth.
  \item During rollout, when an end token is generated, the rollout return is set to 0.
  \item When a state is expanded for the first time, statistics for all strategy actions are initialized with $Q=0$ and $N=0$.
\end{itemize}

\paragraph{Expanded volume.}
Under the above extraction rule, the expanded trajectory pool contains approximately 10,000 trajectories from ExTES and 5,000 from ESConv, totaling about 15,000 trajectories.

\subsubsection{Sub-path training signals}
\label{app:data:signals}

AFPO constructs training signals from sub-paths along each retained trajectory, following the sub-path construction in the main method. The number of sub-path signals increases quadratically with trajectory length. With an average trajectory length of 8.25, this yields about 38 sub-path training instances per trajectory on average. Aggregated over the expanded pool, this corresponds to approximately 381,600 sub-path instances for ExTES and 190,800 for ESConv, totaling about 572,400 sub-path instances. Action-pair ranking supervision is additionally constructed at visited states using the MCTS-derived preference ordering described in the main method.

\subsection{Generated Dialogue Statistics}
\label{app:data:stats}

Statistics over expanded dialogues are summarized as follows:
\begin{itemize}
  \item Dialogue turns: mean 8.29; median 6.
  \item Utterance length: mean 25.9 under the same tokenization/segmentation protocol used in training and evaluation.
\end{itemize}

\subsection{Iterative Tree Refresh}
\label{app:data:iter}

Training is conducted iteratively. Dialogue trees are refreshed every 10 training iterations by re-running MCTS from the same seed pool using the updated models. Each refresh rebuilds the expanded trajectory pool using the same configuration and trajectory extraction rules in Section~\ref{app:data:mcts}.

\subsection{D.6 Training--Evaluation Isolation}
\label{app:data:split}

Seed dialogues are split into training and evaluation partitions before any MCTS expansion. Training-time MCTS trees and trajectories are generated only from the training seed partition. Evaluation-time expanded dialogues are generated only from the evaluation seed partition using frozen models. Evaluation trees and trajectories are not used for training.

\section{Baselines}
\label{app:baselines}

This appendix details the baseline methods and the unified protocol adopted for fair comparison with AFlow.
All baselines are evaluated on the same expanded validation dialogues seeded from ExTES and ESConv.

\subsection{Unified Comparison Protocol}
All methods take the same dialogue prefix (history up to turn $t$) as input and produce the next supporter response.
Automatic metrics generation quality (BLEU/ROUGE-L/METEOR), diversity (Dist-1/2), and fluency (perplexity)---are
computed using the same preprocessing, tokenization, and evaluation scripts for every method. Strategy alignment
metrics (Accuracy/F1) are reported under the unified eight-category strategy set used throughout the paper. When a
baseline method uses a different strategy taxonomy, its strategy outputs are mapped to the unified label space by a
deterministic many-to-one mapping based on semantic equivalence, so that strategy Accuracy/F1 are comparable across
all methods. For preference-based evaluation, we apply the same judge rubric, prompt format, and sampling procedure
across methods. For LLM-based generators, decoding constraints (e.g., maximum length, stopping criteria, and sampling
hyperparameters) are kept consistent, and no additional search, filtering, reranking, or post-hoc rewriting beyond a
baseline's original specification is introduced.

\subsection{Baseline Families}
We categorize baselines by modeling formulation.

\paragraph{Task and interpretability-oriented ESC systems.}
BlenderBot-Joint~\cite{liu-etal-2021-towards}, ESCoT~\cite{zhang2024escot}, and IntentionESC~\cite{zhang-etal-2025-intentionesc}
represent ESC systems that emphasize task formulation and/or interpretability-oriented design.
These methods provide structured reference points for strategy-aware support beyond unconstrained open-ended generation,
and serve as representative baselines for evaluating whether AFlow improves multi-turn supportive behavior under
a unified protocol.

\paragraph{Strategy- and knowledge/persona-enhanced ESC models.}
MISC~\cite{tu-etal-2022-misc}, KEMI~\cite{deng2023knowledge}, EmoDynamiX~\cite{wan2025emodynamix}, and dynamic knowledge
filtering with persona extraction~\cite{hao2025enhancing} represent models that incorporate explicit strategy signals and/or auxiliary
knowledge and persona cues to strengthen context grounding and strategy execution.
They are included to test whether AFlow's prefix-level supervision and value propagation provide gains beyond adding structured
signals or external context alone.

\paragraph{Test-time LLM frameworks, simulation/value-driven methods, and classic empathetic generators.}
MultiAgentESC~\cite{xu-etal-2025-multiagentesc}, SocialSim~\cite{chen2025socialsim}, and Value Reinforcement~\cite{kim2025dialogue} represent
test-time frameworks that leverage simulation-, multi-agent-, or value-related guidance at inference time.
MoEL~\cite{lin2019moel} and MIME~\cite{majumder2020mime} are classic pre-LLM empathetic generators and serve as non-LLM reference baselines.
In addition, strong proprietary LLM baselines (GPT-4o and Claude-3.5) are included as high-capability prompt-based supporters under the same
evaluation protocol, providing competitive modern reference points.

\section{Evaluation Metrics and Evaluation Details}
\label{app:metrics}

This appendix specifies the evaluation protocol, including the evaluation set construction, automatic metrics,
strategy alignment measurement, and pairwise preference evaluation.

\subsection{Evaluation Sets}
Evaluation is conducted on expanded validation dialogues constructed from two in-domain ESC datasets
(ESConv and ExTES). Seed dialogues are first normalized into the structured fields
scene, description, and memory. Each seed instance is then expanded into a trajectory
tree using offline MCTS under the role-based simulation environment in \S\ref{subsec:mcts}. The evaluation dialogues
are sampled from the resulting trees by extracting root-to-leaf paths that satisfy the same validity constraints used
during training-time trajectory collection. The evaluation set is strictly separated from the trajectory trees used
to optimize model parameters, and generated dialogues are scored only after the generator is fixed.

\subsection{Automatic Metrics}
Automatic evaluation reports three complementary aspects: generation quality, diversity, and fluency.
Generation quality is measured by BLEU and ROUGE-L to capture n-gram overlap, and METEOR to account for token-level
matches with flexible alignment. Diversity is measured by Dist-1 and Dist-2 (ratio of distinct unigrams/bigrams in
generated outputs), reflecting lexical variety and mitigating repetition. Fluency is measured by perplexity under a
fixed language model scorer consistent across methods, ensuring comparability of fluency estimates under the same
scoring distribution.

\subsection{Strategy Alignment Metrics}
Therapeutic effectiveness is operationalized by supportive strategy alignment under the unified eight-category
strategy set used throughout the paper. For each generated supporter utterance, a strategy label is assigned by the
strategy classifier described in \S\ref{sec:experiments}. Strategy Accuracy and macro-F1 are then computed by
comparing predicted labels with the reference strategy labels in the evaluation data. When a baseline method uses a
different taxonomy, strategy outputs are mapped to the unified eight-category set by a deterministic many-to-one
mapping based on semantic equivalence before computing Accuracy/F1, ensuring that all methods are evaluated in the
same label space.

\subsection{Pairwise Preference Evaluation}
Pairwise preference evaluation is conducted using two judge sources: GPT-5.2 and human experts. For each evaluation
instance, two candidate dialogues are formed by pairing AFlow outputs with a baseline under identical
dialogue contexts. Judges compare the paired outputs using an identical rubric across methods, reporting preferences
on five dimensions: fluency, identification, comfort, suggest, and overall.
Results are aggregated as Win/Tie/Lose percentages for each baseline comparison.

\subsection{Statistical Reporting}
For automatic metrics, all methods are evaluated on the same set of dialogue contexts, and results are aggregated by
dataset. For preference evaluation, comparisons are conducted on matched pairs under the same contexts, and the
reported Win/Tie/Lose rates are computed over the full set of judged pairs. When multiple judges are used, aggregation
is performed per dimension and then averaged across instances under the same rubric.

\subsection{Reproducibility Notes}
All evaluation scripts use fixed preprocessing and tokenization for metric computation, and identical decoding
constraints are applied across LLM-based generators to control for length and termination effects. The same prompts
and rubric templates are used for all LLM-based judges, and human evaluation follows the same dimension definitions
to ensure consistency between judge sources.

\section{Implementation Details}
\label{app:exp_details}
This appendix specifies the implementation details of AFlow, including the model architecture, the offline MCTS
data construction pipeline, the AFPO optimization setup, and the compute configuration used for training and evaluation.

\subsection{Model Architecture}
\label{app:impl_arch}

\paragraph{Backbone and heads.}
AFlow is instantiated with an instruction-tuned LLM backbone.
Two lightweight heads are attached to the shared backbone: (i) a strategy policy head that parameterizes
$\pi_{\theta}(a\mid s)$ over the predefined strategy set $\mathcal{A}$, and (ii) a value head that parameterizes
$V_{\phi}(s)$ and an action-conditional scorer $V_{\phi}(s,a)$ used by AFPO.
All parameters are optimized under the AFPO objective in \S\ref{sec:afpo}.

\paragraph{Parameter-efficient tuning.}
The policy/value heads and the backbone are trained with a parameter-efficient setup (e.g., LoRA-style adapters),
while the base vocabulary and tokenizer remain unchanged. The adapter configuration follows a standard setting,
including rank $r$, scaling $\alpha$, dropout $p$, and target modules \ph{modules}.

\subsection{Offline MCTS Data Construction}
\label{app:impl_mcts_data}

\paragraph{Role-separated simulation environment.}
Offline tree expansion is conducted in a role-based environment with three agents:
Seeker (help-seeker simulator), Supporter (interaction counterpart conditioned on strategies),
and Rewarder (turn-level scorer).
Each environment is instantiated by a single LLM family, so that Seeker/Supporter/Rewarder
come from the same family (GPT-4o, Claude-3.5, or Gemini-1.5), matching the robustness setting in
Table~\ref{tab:generalization}.

\paragraph{Tree budget and structure.}
For each seed dialogue, a trajectory tree is constructed offline with maximum depth $D=10$.
The search produces a directed tree whose nodes correspond to dialogue prefixes and whose edges correspond to
one-step strategy-conditioned extensions. The final tree corpus includes:
(i) 1000 trees expanded from ExTES seeds and
(ii) 500 trees expanded from ESConv seeds.
Among paths with more than three nodes, the average number of valid paths per tree is approximately 10,
and the average path length is approximately 8 turns.

\paragraph{MCTS configuration.}
Selection follows Eq.~(\ref{eq:action-dist}) with exploration coefficient $\lambda$ and exploration
ratio hyperparameter $\kappa=0.1$.
Rollout uses a bounded depth with maximum depth $L=3$ (as described in \S\ref{subsec:mcts}), and premature termination
is handled by assigning zero return when an end token is produced during rollout.
The number of simulations per tree is set to \ph{\#Sims}.
The search outputs $\widehat{Q}(s)$ and $\widehat{Q}(s,a)$ for visited prefixes and strategy actions, which are used as
training targets by AFPO.

\subsection{AFPO Optimization and Training Schedule}
\label{app:impl_afpo_train}

\paragraph{Training objective.}
Model parameters are optimized by minimizing $\mathcal{L}_{\mathrm{AFPO}}$ in Eq.~(\ref{eq:afpo_Ltotal}),
which combines (i) subpath flow-balance supervision (Eq.~(\ref{eq:afpo_Lpolicy})) and
(ii) edge-level preference supervision (Eq.~(\ref{eq:afpo_Leval})).
The margin hyperparameter is set to $\gamma$, and the trade-off weight is set to $\lambda_{\mathrm{eval}}$.

\paragraph{Iterative training loop.}
Training proceeds in an iterative search--distill loop.
At each iteration, the current $(\pi_{\theta},V_{\phi})$ guides offline MCTS construction to obtain updated
trajectory trees and value estimates. The newly generated tree corpus is then used to update $(\pi_{\theta},V_{\phi})$
under AFPO. This loop enables the supervision signal to track the evolving policy and to refine prefix-level
credit assignment over multiple rounds.

\paragraph{Optimization hyperparameters.}
The total number of training steps is \textbf{100}.
The optimizer uses learning rate $\eta$ with scheduler \texttt{schedule}, batch size $B$,
gradient accumulation $G$, maximum sequence length $L_{\max}$, and gradient clipping $c$.
All runs use identical training steps and stopping criteria across backbones to ensure comparable compute.

\subsection{Inference Configuration}
\label{app:impl_infer}

\paragraph{Strategy candidate set.}
At inference, a compact strategy candidate set $\mathcal{A}_t$ is formed by selecting the top-$K$ strategies under
$\pi_{\theta}(\cdot\mid s_t)$, with $K=\ph{K}$.

\paragraph{Action scoring.}
Each candidate strategy is scored by Eq.~(\ref{eq:infer_score}),
combining the policy prior and the learned value signal $V_{\phi}(s_t,a)$,
followed by greedy selection in Eq.~(\ref{eq:infer_select}).

\subsection{Compute and Reproducibility}
\label{app:impl_compute}

\paragraph{Hardware.}
All training runs are conducted on \textbf{8 $\times$ NVIDIA A100} GPUs (budget-feasible setting).
The offline MCTS construction and AFPO optimization are executed under the same hardware pool.

\paragraph{Randomness control.}
Data sampling, MCTS expansion, and optimization are executed with fixed random seeds $\ph{\text{seed}}$.
Evaluation uses a fixed validation set that is isolated from the MCTS tree construction used for training, matching
the protocol described in \S\ref{sec:method} and \S\ref{sec:afpo}.

\paragraph{Logging.}
For each run, the following artifacts are logged: (i) tree statistics (nodes/edges/valid paths),
(ii) distributions of $\widehat{Q}(s)$ and $\widehat{Q}(s,a)$, (iii) AFPO losses and subpath residuals,
and (iv) evaluation metrics reported in \S\ref{sec:experiments}.

\section{Human and GPT-5.2 Pairwise Evaluation Protocol}
\label{app:human_eval}

This appendix specifies the pairwise evaluation protocol used in Table~\ref{tab:human_eval}, including
the evaluation data, the human rubric, the vote aggregation procedure, the human evaluation form template,
and the GPT-5.2 judge prompt for reproducibility.

\subsection*{H.1 Evaluation Data}
Pairwise evaluation is conducted on 100 held-out multi-turn ESC dialogues.
For each dialogue instance, two candidate supporter responses are collected from the compared systems.
Candidate order is randomized per instance, and system identities are hidden from judges.

\subsection*{H.2 Human Rubric}
Each instance is judged independently on three criteria, producing a ternary decision in \{1, 2, tie\}
for each criterion.

\noindent\textbf{Empathy.}
Preference is given to the response that best recognizes and validates the seeker's emotions with an appropriate tone,
avoiding blame, dismissal, or emotional mismatch.

\noindent\textbf{Helpfulness.}
Preference is given to the response that provides safer and more constructive support aligned with the seeker's needs,
offering useful guidance without over-prescriptive, unsafe, or irrelevant advice.

\noindent\textbf{Coherence.}
Preference is given to the response that best fits the dialogue context, remains logically consistent, and avoids
hallucinated or irrelevant content.

\subsection*{H.3 Win/Tie/Lose Aggregation}
Decisions are parsed per criterion. After accounting for candidate-order randomization,
a choice of \texttt{1} is mapped to a win for AFlow when Candidate 1 is AFlow, otherwise a loss;
\texttt{2} is mapped symmetrically; \texttt{tie} is mapped to a tie.
Percentages in Table~\ref{tab:human_eval} are computed over the 100 instances for each opponent model and criterion.

\subsection*{H.4 Human Pairwise Evaluation Form Template}
\vspace{2pt}
\setlength{\fboxsep}{4pt}

\begin{figure}[t]
\centering
\fbox{%
\begin{minipage}{0.97\columnwidth}
\footnotesize
\noindent\textbf{Task.} Given the background and dialogue context, compare two candidate supporter responses and select
\texttt{1}, \texttt{2}, or \texttt{tie} for each criterion.

\vspace{4pt}\hrule\vspace{4pt}

\noindent\textbf{Background.}
Emotion: \ph{emotion\_type}\UIsep
Problem: \ph{problem\_type}\UIsep
Situation: \ph{situation}\UIsep
Memory (optional): \ph{memory}

\vspace{2pt}
\noindent\textbf{Dialogue Context.} \ph{context\_human}

\vspace{4pt}\hrule\vspace{4pt}

\noindent
\begin{minipage}[t]{0.48\columnwidth}
\textbf{Candidate 1}\\
Strategy: \ph{strg\_1}\\
Response: \ph{resp\_1}
\end{minipage}
\hfill
\begin{minipage}[t]{0.48\columnwidth}
\textbf{Candidate 2}\\
Strategy: \ph{strg\_2}\\
Response: \ph{resp\_2}
\end{minipage}

\vspace{4pt}\hrule\vspace{4pt}

\noindent\textbf{Fluency.} Which response is more fluent and natural in language?\\
\choicebtn{1}\UIsep\choicebtn{2}\UIsep\choicebtn{tie}

\vspace{2pt}
\noindent\textbf{Identification.} Which response better identifies the seeker’s emotions and core needs?\\
\choicebtn{1}\UIsep\choicebtn{2}\UIsep\choicebtn{tie}

\vspace{2pt}
\noindent\textbf{Comfort.} Which response provides better comfort and emotional support?\\
\choicebtn{1}\UIsep\choicebtn{2}\UIsep\choicebtn{tie}

\vspace{2pt}
\noindent\textbf{Suggest.} Which response offers better suggestions or next-step guidance when appropriate?\\
\choicebtn{1}\UIsep\choicebtn{2}\UIsep\choicebtn{tie}

\vspace{2pt}
\noindent\textbf{Overall.} Overall, which response is better for this dialogue turn?\\
\choicebtn{1}\UIsep\choicebtn{2}\UIsep\choicebtn{tie}

\end{minipage}}
\vspace{-4pt}
\caption{Human pairwise evaluation form template (system identities hidden; candidate order randomized).}
\vspace{-6pt}
\label{fig:human_form}
\end{figure}

\subsection*{H.5 GPT-5.2 Judge Prompt}
GPT-5.2 is used as an automatic judge under the identical rubric.
The judge receives the background, dialogue context, and two candidate responses, then outputs one decision for each criterion.

\paragraph{Prompt template (filled per instance).}
\begin{quote}\footnotesize
\textbf{System:} You are a strict and consistent evaluator for emotional support dialogue. Follow the rubric exactly.
Output \textbf{JSON only}. Do not include explanations.

\medskip
\textbf{Background:} \ph{background}\\
\textbf{Dialogue Context:} \ph{dialogue\_context}\\
\textbf{Candidate 1:} \ph{candidate\_1}\\
\textbf{Candidate 2:} \ph{candidate\_2}

\medskip
\textbf{Rubric}\\
Fluency: prefer the response that is more natural, clear, and well-formed in language.\\
Identification: prefer the response that better identifies the seeker’s emotions, concerns, and core needs.\\
Comfort: prefer the response that offers stronger emotional support and appropriate reassurance with a fitting tone.\\
Suggest: prefer the response that provides more useful, feasible next-step guidance when appropriate (without being premature or prescriptive).\\
Overall: prefer the response that is better on balance for this turn, considering the above dimensions.

\medskip
\textbf{Output (JSON only):}
\begin{quote}\footnotesize\ttfamily
\{\\
\quad "fluency": "1|2|tie",\\
\quad "identification": "1|2|tie",\\
\quad "comfort": "1|2|tie"\\
\quad "suggest": "1|2|tie",\\
\quad "overall": "1|2|tie"\\
\}
\end{quote}
\end{quote}

\section{Ablation Study Details}
\label{app:depth}

This appendix presents an ablation-style sensitivity analysis of the offline MCTS configuration used for affective
trajectory construction (\S\ref{subsec:mcts}). The analysis varies the maximum tree depth, the rollout depth $L$, and the
number of MCTS simulations per root, then measures their impact on generation quality, diversity, strategy alignment,
and turn-wise reward trajectory stability. The goal is to justify the default configuration used in the main experiments
from a performance--cost perspective.

\subsection{Experimental Setup}
\label{app:sens_depth:setup}

All settings follow \S\ref{sec:method}--\S\ref{sec:afpo} unless stated otherwise. For each variant, offline MCTS is used
to build affective trajectory trees in the same role-separated environment, and the resulting trajectory data are used
to train $(\pi_{\theta},V_{\phi})$ with the same AFPO objective (Eq.~\ref{eq:afpo_Ltotal}). Training budget, dataset
splits, and evaluation protocol are kept identical across variants.

\paragraph{Variables.}
Three MCTS hyperparameters are varied one at a time while fixing the others to their default values.

\noindent\textbf{(1) Maximum tree depth.}
The maximum depth is varied as $D\in\{6,8,10,12\}$, with the default $D=10$.

\noindent\textbf{(2) Rollout depth.}
The rollout depth used in the bounded-depth rollout (\S\ref{subsec:mcts}) is varied as $L\in\{1,2,3,4\}$, with the
default $L=3$. The terminal-handling rule remains unchanged.

\noindent\textbf{(3) Number of simulations.}
Let $S$ denote the number of MCTS simulations per root state. The sensitivity set uses three budgets
$S\in\{10, 20, 40\}$, with default $S=20$.

\paragraph{Metrics.}
Each configuration is evaluated using the same metrics as the main paper: strategy alignment (Strategy F1), diversity
(Dist-2), fluency (PPL), and reward trajectory stability. Trajectory stability is measured on the same dialogue set as
RQ4 by aggregating turn-wise reward fluctuations of the generated dialogues; specifically, we report
\emph{average adjacent-turn absolute change} (lower indicates smoother trajectories).

\subsection{Recommended Configuration}
\label{app:sens_depth:recommend}

Considering the above trends and the performance--cost trade-off, the default configuration used in the main paper is
recommended: maximum tree depth $D=10$ and rollout depth $L=3$, together with the default simulation budget
$S=20$. This setting provides sufficient long-horizon supervision for subpath flow-balance while maintaining
stable reward trajectories, and avoids the diminishing returns observed with deeper trees, longer rollouts, or larger
simulation budgets under the same overall compute constraints.

\section{Case Study}
\label{app:case}

This appendix provides qualitative evidence for how AFlow manifests \emph{strategy progression} and
\emph{affective-flow consistency} in real multi-turn emotional support dialogues, together with representative failure
modes of strong baselines. Each case reports (i) a dialogue excerpt, (ii) turn-wise strategies predicted/selected by
AFlow and a baseline, and (iii) a turn-wise reward trajectory scored under the same rubric as in \S\ref{sec:method}.
All identities are anonymized and candidate order is randomized when presenting comparisons.

\subsection{Case Selection Criteria}
Cases are selected to cover diverse emotional contexts and to expose long-horizon decision points where intermediate
credit assignment matters. Selection follows three criteria.

\noindent\textbf{(1) Diverse emotional scenes.}
Cases are sampled from distinct emotion or situation categories (e.g., anxiety, sadness/grief, self-doubt, interpersonal
conflict, academic/work stress) to avoid overfitting qualitative conclusions to a single domain.

\noindent\textbf{(2) Comparable context length and difficulty.}
Dialogues are selected with similar turn budgets and comparable difficulty (e.g., multiple user constraints, ambiguous
goals, or repeated negative affect) so that improvements cannot be attributed to trivial contexts.

\noindent\textbf{(3) Clear decision points.}
Each case contains at least one turn where strategy selection is non-trivial (e.g., whether to continue exploration,
validate, or move to action-oriented guidance). These are the turns where flow-based prefix supervision is expected to
change decisions.

\subsection{Representative Success Cases}
This section presents representative cases where AFlow exhibits (i) correct stage-wise progression
(Exploration$\rightarrow$Comforting$\rightarrow$Action), (ii) stable reward improvement, and (iii) reduced oscillation in
supportive intent.

\paragraph{Case 1: Anxiety under uncertainty (Exploration$\rightarrow$Comforting$\rightarrow$Action).}
\noindent\textbf{Scene.} A user is anxious about an upcoming job interview after several rejections. The emotion is anxiety mixed with hopelessness.

\smallskip
\noindent\textbf{Dialogue excerpt.} (T denotes the turn index; U = seeker; S = supporter.)
\begin{quote}\small
\noindent\textbf{U1:} I have another interview tomorrow, but I'm terrified. I feel like I'm just going to fail again.\\
\textbf{S1 (AFlow):} I hear how stressful this is for you. What specifically about tomorrow's interview is making you feel most anxious?\\
\textbf{U2:} It's the technical part. I always freeze up when they ask coding questions.\\
\textbf{S2 (AFlow):} It makes sense that past experiences would make you nervous about the technical round. That's a lot of pressure to carry.\\
\textbf{U3:} Yeah, exactly. I just don't know how to stop panicking in the moment.\\
\textbf{S3 (AFlow):} Since the panic is the main hurdle, maybe we could try a quick grounding technique right before you start coding? Or would you prefer to review some basic concepts to feel more prepared?\\
\end{quote}

\noindent\textbf{Turn-wise strategies and reward trajectory.}
\begin{table}[h]
\centering
\small
\setlength{\tabcolsep}{3.5pt}
\renewcommand{\arraystretch}{1.05}
\resizebox{\columnwidth}{!}{%
\begin{tabular}{c|l|l|c}
\toprule
Turn $t$ & AFlow strategy & Baseline strategy & Rewarder score \\
\midrule
1 & [Clarification]        & [Affirmation]           & 6.5 (AF) / 6.0 (Base) \\
2 & [Emotional Validation] & [Suggest Options]       & 7.2 (AF) / 5.8 (Base) \\
3 & [Collaborative Planning] & [Suggest Options]     & 8.1 (AF) / 5.5 (Base) \\
4 & [Feedback]             & [Affirmation]           & 8.5 (AF) / 6.2 (Base) \\
\bottomrule
\end{tabular}}
\end{table}

\noindent\textbf{Analysis.}
The key improvement occurs at Turn 2. The baseline prematurely suggests options ("You should practice more"), which the user rejects. AFlow instead validates the user's specific fear ("It makes sense..."), building trust. This allows for a successful transition to [Action] in Turn 3, resulting in a monotonic reward increase (6.5 $\to$ 8.5).

\paragraph{Case 2: Interpersonal conflict (controlled pacing before action).}
\noindent\textbf{Scene.} A user is frustrated with a roommate who never cleans up, but is afraid of confrontation.

\smallskip
\noindent\textbf{Dialogue excerpt.}
\begin{quote}\small
\noindent\textbf{U1:} My roommate left the kitchen a mess again. I'm so angry but I hate fighting.\\
\textbf{S1 (AFlow):} It sounds incredibly frustration to come home to a mess, especially when you're trying to avoid conflict.\\
\textbf{U2:} Exactly! I don't want to be the 'nagging' one, but I can't live like this.\\
\textbf{S2 (AFlow):} It’s a tough spot to be in—wanting a clean space but also wanting peace. You value harmony in the house.\\
\textbf{U3:} I do. But maybe I need to say something this time.\\
\textbf{S3 (AFlow):} If you decide to speak up, we could brainstorm a way to say it that feels firm but not aggressive. How does that sound?\\
\end{quote}

\noindent\textbf{Turn-wise strategies and reward trajectory.}
\begin{table}[h]
\centering
\small
\setlength{\tabcolsep}{3.5pt}
\renewcommand{\arraystretch}{1.05}
\resizebox{\columnwidth}{!}{%
\begin{tabular}{c|l|l|c}
\toprule
Turn $t$ & AFlow strategy & Baseline strategy & Rewarder score \\
\midrule
1 & [Restatement]          & [Suggest Options]       & 6.8 (AF) / 5.5 (Base) \\
2 & [Affirmation]          & [Suggest Options]       & 7.5 (AF) / 5.2 (Base) \\
3 & [Collaborative Planning] & [Restatement]         & 8.3 (AF) / 6.0 (Base) \\
4 & [Provide Information]  & [Affirmation]           & 8.6 (AF) / 6.5 (Base) \\
\bottomrule
\end{tabular}}
\end{table}

\noindent\textbf{Analysis.}
This case highlights \emph{pacing}: AFlow delays prescriptive advice until the seeker’s stance becomes
actionable (Turn 3). The baseline repeatedly offers suggestions ("Just tell him to clean"), which ignores the user's fear of conflict. AFlow's validation at Turn 2 empowers the user to propose action themselves ("Maybe I need to say something"), demonstrating effective guidance.

\paragraph{Case 3: Self-doubt (avoiding repetitive comfort; progressing to actionable reframing).}
\noindent\textbf{Scene.} A student feels overwhelmed by a thesis deadline and believes they are not smart enough to finish.

\smallskip
\noindent\textbf{Dialogue excerpt.}
\begin{quote}\small
\noindent\textbf{U1:} I'm never going to finish this thesis. Everyone else is doing so well, and I'm stuck.\\
\textbf{S1 (AFlow):} It’s easy to compare yourself to others when you’re stuck. That feeling of being 'behind' is really heavy.\\
\textbf{U2:} Yeah, I just feel stupid. I've been staring at the same page for hours.\\
\textbf{S2 (AFlow):} Staring at a blank page doesn't mean you aren't capable; it often just means you're burnt out. Have you taken a break today?\\
\textbf{U3:} No, I felt guilty stopping. But maybe I am just tired.\\
\textbf{S3 (AFlow):} Guilt can be a tricky trap. How about we try a small 15-minute break just to reset your brain?\\
\end{quote}

\noindent\textbf{Turn-wise strategies and reward trajectory.}
\begin{table}[h]
\centering
\small
\setlength{\tabcolsep}{3.5pt}
\renewcommand{\arraystretch}{1.05}
\resizebox{\columnwidth}{!}{%
\begin{tabular}{c|l|l|c}
\toprule
Turn $t$ & AFlow strategy & Baseline strategy & Rewarder score \\
\midrule
1 & [Empathetic Statement] & [Affirmation]           & 6.5 (AF) / 6.5 (Base) \\
2 & [Reframe/Clarification]& [Affirmation]           & 7.4 (AF) / 6.0 (Base) \\
3 & [Suggest Options]      & [Affirmation]           & 8.2 (AF) / 5.8 (Base) \\
4 & [Affirmation]          & [Suggest Options]       & 8.5 (AF) / 6.2 (Base) \\
\bottomrule
\end{tabular}}
\end{table}

\noindent\textbf{Analysis.}
Unlike the baseline which falls into a "repetitive comforting" loop (using Affirmation three times in a row: "You can do it", "Don't give up"), AFlow actively reframes the user's self-doubt (Turn 2) from "stupidity" to "burnout". This cognitive shift allows for a natural transition to a concrete suggestion (Turn 3), reflected in the steady rise of reward scores.


